\newcommand{\pop}{{\rm{Popular}}}
\newtheorem{theorem}{Theorem}
\newtheorem{lemma}{Lemma}
\newtheorem{corollary}{Corollary}
\begin{document}

\title{Exact Learning of Juntas from Membership Queries}

\author{Nader H. Bshouty}
\ead{bshouty@cs.technion.ac.il}

\author{Areej Costa\corref{cor1}}
\ead{areej.costa@cs.technion.ac.il}

\address{Department of Computer Science, Technion, 3200003 Haifa, Israel}

\cortext[cor1]{Corresponding author}


\newcommand{\N}[1]{^{\mbox{\tiny(#1)\ }}}
\newcommand{\E}{{\bf E}}

  \newcommand{\specialcell}[2][l]{%
  \begin{tabular}[#1]{@{}l@{}}#2\end{tabular}}

\newcommand{\red}[1]{\textcolor{red}{{#1}}}
\newcommand{\ignore}[1]{}
\newcommand{\ignoreneed}[1]{}


\begin{abstract} In this paper we study adaptive and non-adaptive exact learning of Juntas from membership queries. We use new techniques to find new bounds, narrow some of the gaps between the lower bounds and upper bounds and find new deterministic and randomized algorithms with small query and time complexities.

Some of the bounds are tight in the sense that finding better ones either gives a breakthrough result in some long-standing combinatorial open problem or needs a new technique that is beyond the existing ones.
\end{abstract}

\begin{keyword}
	Exact learning \sep Membership queries \sep Non-adaptive learning \sep Adaptive learning \sep Juntas \sep Relevant variables
\end{keyword}	

\maketitle
\section{Introduction}

Learning from membership queries, \cite{A87}, has flourished due to its many applications in group testing, blood testing, chemical leak testing, chemical reactions, electric shorting detection, codes, multi-access channel communications, molecular biology, VLSI testing and AIDS screening. See many other applications in~\cite{DH00,ND00,BGV05,DH06,Ci13,BG07}. Many of the new applications raised new models and new problems and in many of those applications the function being learned can be an arbitrary function that depends on few variables. We call this class of functions $d$-Junta, where $d$ is a bound on the number of relevant variables in the function.
In some of the applications non-adaptive algorithms are most desirable, where in others adaptive algorithms with limited number of rounds are also useful. Algorithms with high number of rounds can also be useful given that the number of queries they ask is low. In all of the applications, one searches for an algorithm that runs in polynomial time and asks as few queries as possible. In some applications asking queries is very expensive, and therefore, even improving the query complexity by a small non-constant factor is interesting.

In this paper we study adaptive and non-adaptive exact learning of Juntas from membership queries. This problem was studied in~\cite{D00,D98,D03,BGV05}. In this paper, we find new bounds, tighten some of the gaps between some lower bounds and upper bounds\footnote{The term \emph{upper bound} refers to learning with unlimited computational power.} and find new algorithms with better query and time complexities both for the deterministic and the randomized case.

Since learning one term of size $d$ (which is a function in $d$-Junta) requires at least $2^d$ queries and asking one query requires time $O(n)$, we cannot expect to learn the class $d$-Junta in time less than $\Omega(2^d+n).$
We say that the class $d$-Junta is polynomial time learnable if there is an algorithm that learns $d$-Junta in time $poly(2^d,n)$. In this paper we also consider algorithms that run in time $n^{O(d)}$, which is polynomial time for constant $d$, and algorithms that run in time $poly(d^d,n)$, which is polynomial time for $d =O(\log n/\log\log n)$.


\subsection{Results for Non-adaptive Learning}
A set $S\subseteq \{0,1\}^n$ is called an $(n,d)$-universal set if for every $1\le i_1<i_2<\cdots<i_d\le n$
and $\sigma\in\{0,1\}^d$ there is an $s\in S$ such that $s_{i_j}=\sigma_j$ for all $j=1,\ldots,d$.
Damaschke,~\cite{D00}, shows that any set of assignments that learns $d$-Junta must be an $(n,d)$-universal set. This, along with the lower bound in~\cite{SB,KS72}, gives result (1) in Table~\ref{Non-adaptiveTable} for the deterministic case. It is not clear that this lower bound is also true for the randomized case. We use the minimax technique,~\cite{MR95}, to show that randomization cannot help reducing the lower bound. See (2) in the table.
\begin{center}
\begin{table}[H]

\begin{tabular}{c||c|c|c|c|c}
  \toprule
    & \specialcell{Lower\\Bound} & \specialcell{Upper\\Bound} & \specialcell{Time\\$n^{O(d)}$} &  \specialcell{Time\\$poly(d^d,n)$} & \specialcell{Time\\$poly(2^d,n)$} \\
  \hline  \hline
   \multicolumn{6}{c}{{\bf Deterministic}}\\
   \hline  \hline
   Previous  & $\N{1}2^{d}\log{n}$ & \specialcell{$\N{3}d2^{d}\log{n}$\\ \hspace{.25in}$+d^22^d$} & $\N{6}d^22^{d}\log{n}$ & --- &\specialcell{$\N{11}2^{d+O(\log^{2}{d})}\cdot$\\ \hspace{.50in}$\log^{2}{n}$}\\
   \hline
   Ours    & --- & $\N{4}d2^{d}\log{n}$ & $\N{7}d2^{d}\log{n}$ & $\N{9}d^32^d\log{n}$ & \specialcell{$\N{12}2^{d+O(\log^{2}{d})}\cdot$\\ \hspace{.50in}$\log{n}$}\\
   \hline  \hline
   \multicolumn{6}{c}{{\bf Randomized}}\\
   \hline  \hline
   Previous & --- & --- & --- & --- & ---    \\
   \hline
   Ours &$\N{2}2^d\log{n}$ & $\N{5}d2^{d}\log{n}$ & $\N{8}d2^{d}\log{n}$ & $\N{10}d^32^d\log{n}$ & \specialcell{$\N{13}(\log{d})d^22^d\cdot$\\ \hspace{.15in}$(\log n+\log{\frac{1}{\delta}})$ }\\
  \bottomrule
\end{tabular}
\caption{Results for Non-adaptive Learning}
\label{Non-adaptiveTable}
\end{table}
\end{center}


Damaschke introduces a graph-theoretic characterization of non-adaptive learning families, called $d$-wise bipartite connected families, \cite{D00}. He shows that for an $(n,d)$-universal set of assignments $S$, it can non-adaptively learn $d$-Junta if and only if $S$ is a $d$-wise bipartite connected family.
He then shows, with a non-constructive probabilistic method, that there
exists such a family of size $O(d2^d\log n+d^22^d)$. See (3) in the table. Then, he shows that a $d$-wise bipartite connected family of size $O(d^22^d\log n)$ can be constructed in $n^{O(d)}$ time, \cite{D00}. This, along with his deterministic learning algorithm presented in~\cite{D00} gives result (6) in the table. We further investigate the $d$-wise bipartite connected families and show that there exists one
of size $O(d2^d\log n)$. We then use the technique in~\cite{NSS95} to construct one of such size in time~$n^{O(d)}$.
This gives results (4), (5), (7) and (8), where the results for the randomized algorithms follow from the corresponding deterministic algorithms.
We then use the reduction of Abasi et. al in~\cite{ABM} to give a non-adaptive algorithm that runs in $poly(d^d,n)$ time and asks $O(d^32^d\log n)$ queries.
This is result (9) in the table. Then, result (10) for the randomized case follows.

We also introduce a new simple non-adaptive learning algorithm and apply the same reduction to this algorithm to get result (12) in the table.
This reduces the quadratic factor of $\log^2{n}$ in the query complexity of result (11) to a linear factor of $\log{n}$. Result (11) follows from a polynomial time learning algorithm for $d$-Junta given by Damaschke in \cite{D98}, which asks $2^{d+O(\log^2d)}\log^2 n$ queries. Finally, we give a new Monte Carlo randomized polynomial time algorithm that learns $d$-Junta with $O(d2^d(\log d+\log(1/\delta))\log n+d2^d(d+\log(1/\delta))(\log d+\log(1/\delta)))$ queries. Then we present a new reduction for randomized non-adaptive algorithms and apply it to this algorithm to get result (13) in the table. In addition, for $\delta \leq 1/d$, we give a Monte Carlo randomized polynomial time algorithm that asks $O((d2^d\log n+d^22^d)\log(1/\delta))$ queries. This algorithm is based on the previous one before the reduction. For values of $\delta$ small enough, the result in (13) is better.

The significant improvements over the previous results (see the table) are results (2), (4), (9), (12) and~(13).


\subsection{Results for Adaptive Learning}
As stated above, Damaschke shows in~\cite{D00} that any deterministic learning algorithm for $d$-Junta must ask a set of queries that is an $(n,d)$-universal set. This applies both for non-adaptive and adaptive learning. Therefore, applying the lower bound from~\cite{SB,KS72} gives result (1) in Table~\ref{AdaptiveTable}. A lower bound for the randomized case follows from a general lower bound presented in~\cite{ABM14}. See result (2). Damaschke,~\cite{D00}, shows that functions from $d$-Junta can be adaptively learned by asking a set of queries that is an $(n,d)$-universal set followed by $O(d\log{n})$ queries. This, along with the upper bound from the probabilistic method on the size of $(n,d)$-universal sets gives result (3) in the table. Furthermore, Damaschke's algorithm runs in time $T+poly(2^d,n)$, where $T$ is the construction time of the ${(n,d)}$-universal set. An ${(n,d)}$-universal set of size ${O(d2^{d}\log{n})}$ can be constructed deterministically by the derandomization method in $n^{O(d)}$ time as stated in \cite{NSS95}. This gives us an algorithm for adaptive learning that runs in ${n^{O(d)}}$ time and asks $O(d2^{d}\log{n})$ queries. See result (5). The polynomial time construction of an ${(n,d)}$-universal set in~\cite{NSS95} gives result (11) in the table.

Let $H$ be a family of functions ${h : [n] \rightarrow [q]}$, and let be ${d \leq q}$. We say that $H$ is an $(n,q,d)$-perfect hash family ($(n,q,d)$-PHF) if for every subset ${S \subseteq [n]}$ of size ${|S| = d}$ there is a hash function ${h \in H}$ such that ${|h(S)| = d}$. In~\cite{D98} Damaschke gives a two round algorithm that learns $d$-Junta in ${poly(d^d,n)}$ time using an $(n,d^2,d)$-PHF. This is result (7) in the table. We improve this algorithm by using another construction of PHF based on a lemma from~\cite{B15}. We can further improve the query complexity of this algorithm and get result (8) in the table based on a non-adaptive algorithm of ours.
A two round Monte Carlo randomized algorithm that runs in ${poly(d^d,n)}$ time is introduced in \cite{D98,D03}. If we modify its analysis slightly to make it applicable for a general failure probability $\delta$, we get the result in (9). In the paper we show how we can improve this query complexity and get result (10) in the table.

Results (4), (6) and (12) in the table are based on a new Monte Carlo randomized algorithm that we present in the paper. Result (13) follows also from a new Monte Carlo randomized algorithms that we introduce in this paper.

We note here that the results in (4), (6), (10), (12) and (13) are for values of the failure probability $\delta$ that are at most $1/d$. For $\delta > 1/d$, these three algorithms run in different query complexities as stated in Theorems~\ref{randAdap1}, \ref{poly2round} and \ref{manyRounds} in this paper. We present here the results only for $\delta \leq 1/d$ because we are generally interested in values of $\delta$ that are small.

To conclude, our new results for adaptive learning are the results in (2), (8), (10), (12) and (13).
We note also that for the deterministic case, our non-adaptive results are as good as the adaptive ones presented here.
Therefore, the deterministic adaptive algorithms don't offer us better results compared with the results of the deterministic non-adaptive algorithms. We still present these algorithms because they are useful for analyzing the randomized adaptive algorithms.


\begin{table}[H]
\small
\begin{tabular}{c||l|l|l|l|l}
  \toprule
    & \multicolumn{1}{c|}{\specialcell{Lower\\Bound}} &  \multicolumn{1}{c|}{\specialcell{Upper\\Bound}} & \multicolumn{1}{c|}{\specialcell{Time\\ $n^{O(d)}$}} &  \multicolumn{1}{c|}{\specialcell{Time\\ $poly(d^d,n)$}} & \multicolumn{1}{c}{\specialcell{Time\\ $poly(2^d,n)$}} \\
  \hline  \hline
   \multicolumn{6}{c}{{\bf Deterministic}}\\
   \hline  \hline
   Previous  & $\N{1}2^{d}\log{n}$ & \specialcell{$\underline{r=O(d\log{n})}$\\$\N{3}d2^{d}\log{n}$} & \specialcell{$\underline{r=O(d\log{n})}$\\$\N{5}d2^{d}\log{n}$} & \specialcell{$\underline{r=2}$\\$\N{7}(\log{d})d^3\cdot$ \\ \hspace{.21in}$2^d\log{n}$} & \specialcell{$\underline{r=O(d\log{n})}$\\$\N{11}2^{d+O(\log^{2}{d})}\cdot$ \\ \hspace{.5in}$\log{n}$}\\
   \hline
   Ours    &   \multicolumn{1}{c|}{---} &   \multicolumn{1}{c|}{---} &  \multicolumn{1}{c|}{---} & \specialcell{$\underline{r=2}$\\$\N{8}d^32^d\log{n}$} &   \multicolumn{1}{c}{---} \\
   \hline  \hline
   \multicolumn{6}{c}{{\bf Randomized}}\\
   \hline  \hline
   Previous &   \multicolumn{1}{c|}{---} &  \multicolumn{1}{c|}{---} &   \multicolumn{1}{c|}{---} & \specialcell{$\underline{r=2}$\\$\N{9}d\log{n}+$\\ \hspace{.17in}$d^22^d\log{\frac{1}{\delta}}$} &   \multicolumn{1}{c}{---}   \\
   \hline
   Ours &   \specialcell{$\N{2}d\log{n}+$\\ \hspace{.35in}$2^d$} &\specialcell{$\underline{r=O(d\log{d})}$\\ $\N{4}d\log{n}+$\\ \hspace{.17in}$2^d\log{\frac{1}{\delta}}$} &\specialcell{$\underline{r=O(d\log{d})}$\\ $\N{6}d\log{n}+$\\\hspace{.17in}$2^d\log{\frac{1}{\delta}}$} & \specialcell{$\underline{r=2}$\\$ \N{10}d\log{n}+$\\ \hspace{.17in}$d2^d\log{\frac{1}{\delta}}$} & \specialcell{$\underline{r=O(d\log{d})}$\\$\N{12}d\log{n}+$\\ \hspace{.23in}$2^d\log{\frac{1}{\delta}}$\\ \\ $\underline{r=2}$\\$\N{13}d\log{n}+$\\ \hspace{.23in}$d^22^d\log{\frac{1}{\delta}}$}   \\
  \bottomrule
\end{tabular}
\caption{Results for Adaptive Learning}
\label{AdaptiveTable}
\begin{tablenotes}
      \small
      \item The parameter $r$ denotes the number of rounds.
    \end{tablenotes}

\end{table}


\section{Definitions and Preliminary Results}
In this section we give some definitions and preliminary results that will be used throughout the paper.

Let $n$ be an integer. We denote $[n]=\{1,\ldots,n\}$. Consider the set of {\it assignments}, also called {\it membership queries} or {\it queries}, $\{0,1\}^n$.
A function $f$ is said to be a {\it boolean function} on $n$ variables
$x=(x_1,\ldots,x_n)$ if $f:\{0,1\}^n\to \{0,1\}$. For an assignment $a\in\{0,1\}^n$, $i\in [n]$ and $\xi\in\{0,1\}$
we denote by $a|_{x_i\gets \xi}$ the assignment $(a_1,\ldots,a_{i-1},$ $\xi,a_{i+1},\ldots,a_n)$.
We say that the variable $x_i$ is {\it relevant} in $f$
if there is an assignment $a=(a_1,\ldots,a_n)$ such that $f(a|_{x_i\gets 0})\not= f(a|_{x_i\gets 1})$.
We say that the variable $x_i$ is {\it irrelevant} in $f$ if it is not relevant in $f$.

Given a boolean function $f$ on $n$ variables $x=\{x_1,\ldots,x_n\}$ and an assignment ${a \in \{0,1\}^n}$ on the variables, we say that ${x_i}$, for $1 \leq i \leq n$, is {\it sensitive in $f$ w.r.t.} $a$ if $f({a|_{{x_i}\gets 0}}) \neq $ $f({a|_{{x_i}\gets 1}})$.

For a boolean function on $n$ variables $f$, a variable $x_i$ and a value $\xi\in\{0,1\}$ we
denote by $f|_{x_i\gets \xi}$ the boolean function on $n$ variables
$g(x_1,\ldots,x_n)=f(x_1,\ldots,x_{i-1},\xi,x_{i+1},$ $\ldots,x_n)$.
In addition, for a boolean variable $x \in \{0,1\}$ we denote $x^1 := x$ and $x^0 := \bar{x}$, where $\bar x$ is the negation of $x$.

For a class of boolean functions $C$ we say that a set of assignments $A\subseteq \{0,1\}^n$ is an {\it equivalent set for} $C$ if for every two distinct functions $f,g\in C$ there is an $a\in A$ such that $f(a) \neq g(a)$. Obviously, an equivalent set $A$ for $C$ can be used to non-adaptively learn $C$. Just ask all the the queries in $A$ and find the function $f\in C$ that is consistent with all the answers. By the definition of equivalent set this function is unique.

For integers $n$ and $d\le n$ we define the set $d$-Junta as
the set of all functions $f:\{0,1\}^n\to \{0,1\}$ with at most
$d$ relevant variables.

A set $S\subseteq \{0,1\}^n$ is called a $d$-{\it wise independent set} if for every $1\le i_1< i_2<\cdots<i_d\le n$ and every $\xi_1,\ldots,\xi_d\in \{0,1\}$, for a random uniform $x=(x_1,\ldots,x_n)\in S$ we have $\Pr [x_{i_1}=\xi_1,\ldots,x_{i_d}=\xi_d]=1/2^d.$
Alon et. al. show

\begin{lemma}\cite{ABI}\label{ABI} There is a construction of a $d$-wise independent set of size $O((2n)^{\lfloor d/2\rfloor})$ that runs in $n^{O(d)}$ time.
\end{lemma}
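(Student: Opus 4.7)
The plan is to realize the sample space as the dual of a binary BCH code. First I would recall the standard duality: for an $\mathbb{F}_2$-linear subspace $V\subseteq \{0,1\}^n$, the uniform distribution on $V$ is $d$-wise independent if and only if the dual code $V^\perp$ has minimum distance at least $d+1$. One direction is immediate, since any $d$ coordinates being uniform forces the restriction of $V$ to those coordinates to be surjective onto $\{0,1\}^d$; the converse uses the fact that a surjective $\mathbb{F}_2$-linear map from a finite vector space pushes the uniform distribution to the uniform distribution. Hence the task reduces to producing a binary linear code of length $n$ with minimum distance at least $d+1$ whose codimension is $O(d\log n)$, and then taking $V$ to be its dual.

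To build such a code I would use the primitive binary BCH construction. Let $m=\lceil \log_2(n+1)\rceil$ and work in $\mathbb{F}_{2^m}$, fixing a primitive element $\alpha$ and identifying the $n$ coordinates with $n$ distinct nonzero elements of $\mathbb{F}_{2^m}$. Impose the parity checks ``$\sum_i c_i\alpha^{ji}=0$'' for $j$ ranging over the odd integers $1,3,5,\ldots,2\lfloor d/2\rfloor-1$, together with the single ``even-weight'' check $\sum_i c_i=0$ when $d$ is odd. Since we are in characteristic $2$, the Frobenius automorphism $x\mapsto x^2$ sends a vanishing condition at $\alpha^j$ to one at $\alpha^{2j}$ for free, so the listed checks imply vanishing at all of $\alpha,\alpha^2,\ldots,\alpha^d$; by the BCH bound this gives minimum distance at least $d+1$ (for odd $d$ the even-weight check bumps the guaranteed distance from $d$ to $d+1$). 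Each $\mathbb{F}_{2^m}$-valued check contributes $m$ binary parity checks, so the codimension of the primal code is at most $m\lfloor d/2\rfloor+1$, giving $|V|\le 2\cdot 2^{m\lfloor d/2\rfloor}=O((2n)^{\lfloor d/2\rfloor})$.

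The main obstacle is the BCH bound itself in characteristic $2$: one must show that no nonzero codeword of Hamming weight at most $d$ exists. The proof reduces, via Frobenius closure, to showing that the $d\times d$ matrix $(\alpha^{j i_k})$ with $j\in\{1,\ldots,d\}$ and $i_1<\cdots<i_d$ distinct is nonsingular over $\mathbb{F}_{2^m}$; after factoring $\alpha^{i_k}$ out of column $k$ this becomes the classical Vandermonde in the distinct nonzero elements $\alpha^{i_1},\ldots,\alpha^{i_d}$. The Frobenius trick is exactly what halves the number of explicit roots from $d$ to $\lfloor d/2\rfloor$, and obtaining the floor (rather than the ceiling) for odd $d$ is where the even-weight augmentation is essential.

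Finally, for the running time: arithmetic in $\mathbb{F}_{2^m}$ is performed in $\mathrm{poly}(m)=\mathrm{poly}(\log n)$ time, the generator matrix of $V$ is written down in $\mathrm{poly}(n,d)$ time from the parity-check matrix just described, and enumerating all $|V|$ codewords to output the explicit sample space takes time $O(n\cdot|V|)=n\cdot O((2n)^{\lfloor d/2\rfloor})=n^{O(d)}$, matching the claim.
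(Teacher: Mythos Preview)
The paper gives no proof of this lemma; it is simply quoted from \cite{ABI}. Your argument via duals of binary BCH codes is precisely the classical construction behind that citation, and the details are correct: the duality between $d$-wise independence of the uniform distribution on $V$ and minimum distance $\ge d+1$ of $V^\perp$, the Frobenius closure that lets $\lfloor d/2\rfloor$ odd roots generate the full consecutive run $\alpha^1,\ldots,\alpha^{d-1}$ (and in fact $\alpha^d$ as well when $d$ is even, since the odd part of $d$ is then at most $d-1$), the even-weight check to push the designed distance from $d$ to $d+1$ when $d$ is odd, and the Vandermonde verification of the BCH bound. The resulting codimension bound $m\lfloor d/2\rfloor+1$ with $m=\lceil\log_2(n+1)\rceil$ gives $|V|=O((2n)^{\lfloor d/2\rfloor})$, and explicit enumeration of $V$ costs $n\cdot|V|=n^{O(d)}$, as claimed.
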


\subsection{Universal Sets, $d$-wise Bipartite Connected Families and Perfect Hash Families}
A set $S\subseteq \{0,1\}^n$ is called an $(n,d)$-{\it universal set} if for every $1\le i_1<i_2<\cdots<i_d\le n$
and $\sigma\in\{0,1\}^d$ there is an $s\in S$ such that $s_{i_j}=\sigma_j$ for all $j=1,\ldots,d$.
Denote by $U(n,d)$ the minimum size of an $(n,d)$-universal set. It is known,~\cite{KS72,SB}, that
\begin{eqnarray}\label{LBUS}
U(n,d)=\Omega(2^d\log n), \ \ \ U(n,d)=O(d2^d\log n)\enspace.
\end{eqnarray}
The following result is a folklore result. We give the proof for completeness.
\begin{lemma} Let be $m=O(2^d(d\log n+\log(1/\delta)))$. Let be $S=\{s^{(1)},\ldots,s^{(m)}\}$ where each $s^{(i)}$ is uniformly independently chosen from $\{0,1\}^n$.
With probability at least $1-\delta$ the set $S$ is an $(n,d)$-universal set.
\end{lemma}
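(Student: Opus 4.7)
The plan is a standard probabilistic/union-bound argument. First, I would fix a candidate ``bad event'': for a specific choice of indices $1 \le i_1 < i_2 < \cdots < i_d \le n$ and a specific pattern $\sigma \in \{0,1\}^d$, let $B_{i_1,\ldots,i_d,\sigma}$ be the event that no $s^{(k)} \in S$ satisfies $s^{(k)}_{i_j} = \sigma_j$ for all $j$. Since each $s^{(k)}$ is uniform on $\{0,1\}^n$, the probability that a single $s^{(k)}$ matches the pattern on those $d$ coordinates is exactly $1/2^d$, and by independence across the $m$ samples we get $\Pr[B_{i_1,\ldots,i_d,\sigma}] = (1-1/2^d)^m \le e^{-m/2^d}$.

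Next I would apply a union bound over all $(i_1,\ldots,i_d,\sigma)$. The number of such tuples is $\binom{n}{d} 2^d \le n^d\, 2^d$, so the probability that $S$ fails to be $(n,d)$-universal is at most $n^d\, 2^d\, e^{-m/2^d}$. I would then choose $m$ so that this quantity is at most $\delta$; taking logarithms it suffices to have $m/2^d \ge d \ln n + d \ln 2 + \ln(1/\delta)$, which is achieved by $m = O\!\left(2^d(d\log n + \log(1/\delta))\right)$, matching the statement.

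The main (and only) subtlety is bookkeeping the constant hidden in the $O(\cdot)$: one must be slightly careful to absorb the $d \ln 2$ term into the $d \log n$ term (valid for $n \ge 2$) and to convert between $\ln$ and $\log$ by a constant factor. There is no real obstacle here; the argument is a one-shot union bound plus the elementary inequality $1-x \le e^{-x}$, and the counting $\binom{n}{d} \le n^d$ is trivial.
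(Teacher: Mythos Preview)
Your proposal is correct and follows essentially the same argument as the paper: a union bound over the $\binom{n}{d}2^d$ choices of coordinates and pattern, combined with $(1-1/2^d)^m$ for the probability that all $m$ samples miss a fixed pattern. The paper's proof is in fact terser than yours, writing only $\Pr[S\text{ not }(n,d)\text{-US}]\le \binom{n}{d}2^d(1-1/2^d)^m\le\delta$ without spelling out the constant bookkeeping you mention.
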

\begin{proof} By the union bound and since $s^{(i)}$ are chosen uniformly independently,
\begin{eqnarray*}
\Pr[S \mbox{\ is not $(n,d)$-US}]&=&
\Pr[(\exists(i_1,\ldots,i_d)(\exists a\in\{0,1\}^d)(\forall j)\ s^{(j)}_{i_1},\ldots,s^{(j)}_{i_d}\not=a]\\
&\le& \binom{n}{d} 2^d\left(1-\frac{1}{2^d}\right)^m\le \delta\enspace.
\end{eqnarray*}
\end{proof}
This also implies the upper bound of $O(d2^d\log n)$ in (\ref{LBUS}).
Different deterministic constructions for $(n,d)$-universal sets are useful, especially for adaptive learning. The best known constructions are stated in the following.
\begin{lemma}~\cite{NSS95,B15}\label{LemmaUS} There is a
deterministic construction for an $(n,d)$-universal set of size $s$ that runs in time $T$
where
\begin{enumerate}
\item $T=poly(2^d,n)$ and $s=2^{d+O(\log^2 d)}\log n$.
\item $T=poly(d^d,n)$ and $s=O(d^32^d\log n)$.
In particular, $T=poly(n)$ for $d=O(\log n/$ $\log\log n)$.
\item $T=n^{O(d)}$ and $s=O(d2^d\log n)$.
In particular, $T=poly(n)$ for $d=O(1)$.
\end{enumerate}
\end{lemma}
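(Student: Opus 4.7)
The plan is to reduce all three items to a common covering problem and apply progressively stronger derandomizations. Let $\mathcal{D}$ denote the set of \emph{demands} $(I,\sigma)$ with $I=\{i_1<\cdots<i_d\}\subseteq[n]$ and $\sigma\in\{0,1\}^d$, so $|\mathcal{D}|=\binom{n}{d}2^d$ and a uniformly random $s\in\{0,1\}^n$ covers any fixed demand with probability $2^{-d}$. The goal is to build $S\subseteq\{0,1\}^n$ hitting every demand, and the three items differ only in which derandomization we can afford.

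For item 3, I would use the greedy method of conditional expectations. In each round, pick an assignment $s$ that covers at least a $2^{-d}$ fraction of the remaining uncovered demands; such an $s$ exists by averaging and can be built bit by bit, since each demand constrains only $d$ coordinates and so the conditional expectation after fixing a partial assignment is a sum of at most $|\mathcal{D}|\le n^d 2^d$ local contributions, each computed in $O(d)$ time. Iterating until all demands are covered takes $O(2^d\log|\mathcal{D}|)=O(d 2^d\log n)$ rounds, and each round runs in $n^{O(d)}$ time, matching item 3.

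For item 2, I would follow the composition approach underlying \cite{B15}. Take an $(n,q,d)$-perfect hash family $H$ with $q=O(d^2)$ and a $(q,d)$-universal set $U\subseteq\{0,1\}^q$, and form $S=\{a^{h,u}:h\in H,\ u\in U\}$ by $a^{h,u}_i=u_{h(i)}$. For any $d$-subset $I\subseteq[n]$ some $h\in H$ is injective on $I$, after which any pattern on $I$ is realized by an appropriate $u\in U$. Using a PHF of size $poly(d)\log n$ computable in $poly(d^d,n)$ time, together with a small universal set $U$ of size $O(d 2^d\log d)$ obtained by applying item 3 at parameter $q=O(d^2)$ in time $q^{O(d)}=d^{O(d)}$, careful accounting yields $|S|=O(d^3 2^d\log n)$ and total time $poly(d^d,n)$.

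For item 1, I would appeal to the near-optimal construction of \cite{NSS95}. The main obstacle is the time bound $poly(2^d,n)$, which forbids any enumeration over $\binom{n}{d}$ demands; instead one must replace the full cube $\{0,1\}^n$ by an almost $d$-wise independent distribution of small support together with an $\varepsilon$-biased space at $\varepsilon=2^{-d}/poly(d)$. Both are available explicitly, e.g.\ from Naor--Naor style $\varepsilon$-biased spaces of size $poly(n/\varepsilon)$ together with $d$-wise independent spaces from Lemma~\ref{ABI}, and recursively composing them along the lines of Naor--Schulman--Srinivasan brings the final size down to $2^{d+O(\log^2 d)}\log n$ while keeping the construction time at $poly(2^d,n)$. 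This recursive composition, and the bookkeeping that produces the $O(\log^2 d)$ overhead, is where I expect the bulk of the technical work.
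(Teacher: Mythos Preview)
The paper does not prove this lemma; it is stated with a bare citation to \cite{NSS95,B15} and used as a black box, so there is no in-paper argument to compare your proposal against. Your write-up is therefore an attempted reconstruction of those cited results, and on that basis:

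Your sketch for item~3 is correct and is the standard greedy/conditional-expectation argument underlying \cite{NSS95}.

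For item~2 your composition idea is right, but the arithmetic does not close with the parameters you chose. With $q=O(d^2)$, the PHF from Corollary~\ref{PHF2} has size $O(d^2\log n)$ and the inner $(q,d)$-universal set from item~3 has size $O(d\,2^d\log q)=O(d\,2^d\log d)$, so the product is $O(d^3 2^d\log d\cdot\log n)$, carrying an extra $\log d$. Taking $q=d^3$ instead repairs this: the $(n,d^3,d)$-PHF then has size $O(d^2\log n/\log d)$ (Corollary~\ref{PHF2}, part~2), the inner universal set still has size $O(d\,2^d\log d)$, and the $\log d$ factors cancel to give the claimed $O(d^3 2^d\log n)$. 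The time bound $poly(d^d,n)$ is unaffected.

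For item~1 your paragraph is not a proof but a pointer back to \cite{NSS95}; the recursive splitter construction and the bookkeeping that produces the $2^{O(\log^2 d)}$ overhead are exactly the substance of that paper, and you have not reproduced any of it. Since the present paper also simply cites \cite{NSS95} here, this is not a gap relative to the paper, but it is a gap if you intend a self-contained argument.
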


Let $A\subseteq \{0,1\}^n$ be a set of assignments. For non-negative integers $d_1$, $d_2$ such that $d=d_1+d_2$ and for $i=(i_1,\ldots,i_{d_2}), j=(j_1,\ldots,j_{d_1}), k=(k_1,\ldots,k_{d_2})$, with entries that are distinct elements in $[n]$ and $z\in \{0,1\}^{d_1}$, we define a bipartite graph $B:=B(i,j,k,z,A)$ as follows. The set of vertices of the left side of the graph is $V_L(B):=\{0,1\}^{d_2}\times \{L\}$ and the set of vertices of the right side of the graph is $V_R(B):=\{0,1\}^{d_2}\times\{R\}$. For $a',a''\in \{0,1\}^{d_2}$,  $\{(a',L),(a'',R)\}$ is an edge in $B$ if and only if there is an assignment $a\in A$ such that $(a_{i_1},\ldots,a_{i_{d_2}})=a'$, $(a_{k_1},\ldots,a_{k_{d_2}})=a''$ and $(a_{j_1},\ldots,a_{j_{d_1}})=z$.

We say that $A$ is a $d$-{\it wise bipartite connected family} if for all $d_1$, $d_2$ such that $d=d_1+d_2$, and for all $i=(i_1,\ldots,i_{d_2})$, $j=(j_1,\ldots,j_{d_1})$, $k=(k_1,\ldots,k_{d_2})$ and $z\in \{0,1\}^{d_1}$ as described above, the graph $B(i,j,k,z,A)$ is connected. That is, there is a path from any vertex in the graph to any other vertex.
Obviously, any $d$-wise bipartite connected family is an $(n,d)$-universal set. Just take $d_1=0$ and $d_2=d$.

In \cite{D00}, Damaschke proves
\begin{lemma}~\cite{D00}\label{equi} A set $A$ is an equivalent set for $d$-Junta if and only if $A$ is a $d$-wise bipartite connected family.
\end{lemma}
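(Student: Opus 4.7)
The plan is to prove both directions of the equivalence by using the bipartite graph $B(i,j,k,z,A)$ as a bridge between distinguishability of $d$-Juntas and path connectivity.

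For the direction ``$d$-wise bipartite connected $\Rightarrow$ equivalent set'': given distinct $f,g\in d$-Junta, let $I$ denote the variables relevant to both, $J_f$ those relevant only to $f$, and $J_g$ those relevant only to $g$. Assuming WLOG $|J_f|\le|J_g|$, set $d_2=|J_g|$ and $d_1=d-d_2$ (so $d_1\ge|I|$), and choose ordered tuples $i\supseteq J_f$, $j\supseteq I$, $k\supseteq J_g$ of sizes $d_2,d_1,d_2$ with pairwise-distinct entries, padding with indices irrelevant to both $f$ and $g$. Pick $x^\ast$ with $f(x^\ast)\ne g(x^\ast)$ and let $z=x^\ast_j$. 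Define $F,G:\{0,1\}^{d_2}\to\{0,1\}$ by freezing the $j$-coordinates to $z$: $F(a')=f(\tilde x)$ for any $\tilde x$ with $\tilde x_i=a'$, $\tilde x_j=z$ (well-defined because the variables relevant to $f$ sit inside the entries of $i$ and $j$), and analogously for $G$. Then $F(x^\ast_i)\ne G(x^\ast_k)$, and by the hypothesis there is a path in $B(i,j,k,z,A)$ from $(x^\ast_i,L)$ to $(x^\ast_k,R)$. Labeling each left vertex $(a',L)$ by $F(a')$ and each right vertex $(a'',R)$ by $G(a'')$, the endpoints of the path have different labels, so some edge has differently-labeled endpoints; that edge arises from an $a\in A$ with $a_j=z$ satisfying $f(a)=F(a_i)\ne G(a_k)=g(a)$.

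For the converse (contrapositive), suppose $B(i,j,k,z,A)$ is disconnected and fix a connected component $C$. Put
\[ f(x):=\mathbf{1}[(x_i,L)\in C]\cdot\prod_{l=1}^{d_1}x_{j_l}^{z_l},\qquad g(x):=\mathbf{1}[(x_k,R)\in C]\cdot\prod_{l=1}^{d_1}x_{j_l}^{z_l}. \]
Each function depends on at most $d_1+d_2=d$ variables, so $f,g\in d$-Junta. For any $a\in A$ with $a_j\ne z$ both vanish, and for $a\in A$ with $a_j=z$ the edge $\{(a_i,L),(a_k,R)\}$ lies in $B$, so its endpoints are both in $C$ or both outside, yielding $f(a)=g(a)$. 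Since $C$ is a proper nonempty subset of $V(B)$, a short case split (on whether $C\cap V_L$ and $C\cap V_R$ are empty, full, or partial) produces $a',a''\in\{0,1\}^{d_2}$ with $[(a',L)\in C]\ne[(a'',R)\in C]$; setting $x_i=a'$, $x_k=a''$, $x_j=z$ then witnesses $f(x)\ne g(x)$, so $A$ is not an equivalent set.

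The main obstacle I anticipate is the indexing bookkeeping in the first direction, specifically the padding of $i,j,k$ into pairwise-disjoint tuples of sizes $d_2,d_1,d_2$ when $n$ is barely larger than $d$. This is handled by the symmetry $|J_f|\le|J_g|$ (which minimizes the padding needed on the $i$-side) and, in the residual tight case where the relevant variables of $f$ and $g$ together nearly exhaust $[n]$, by falling back on the $d_2=0$ slice of the hypothesis, which coincides with the $(n,d)$-universal set condition and on its own suffices in that regime.
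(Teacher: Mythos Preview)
The paper does not actually prove this lemma; it is quoted from Damaschke~\cite{D00} without proof. Your two directions are the standard argument, and the backward (contrapositive) direction is clean and correct as written.

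In the forward direction your labeling/path idea is exactly right, but the bookkeeping you flag is indeed not fully patched by your fallback. With $m=|I|$, $p=|J_f|$, $q=|J_g|$ (and $p\le q$), your allocation $d_2=q$ requires $2d_2+d_1=d+q\le n$, while the $d_2=0$ (universal-set) slice only handles the case $m+p+q\le d$. These two regimes do not exhaust all possibilities: for instance $d=3$, $n=4$, $m=0$, $p=q=2$ satisfies $m+p+q=4>d$ yet $d+q=5>n$, so neither branch applies. The clean repair is to take
\[
d_2=\max\bigl(0,\; m+p+q-d\bigr),
\]
place all of $I$ in $j$, place exactly $d_2$ elements of $J_f$ in $i$ and the remaining $p-d_2$ in $j$, and likewise place $d_2$ elements of $J_g$ in $k$ and the remaining $q-d_2$ in $j$. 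From $m+p\le d$ and $m+q\le d$ one gets $d_2\le p\le q$ and $|j|=m+(p-d_2)+(q-d_2)=2d-m-p-q=d_1$, so when $m+p+q>d$ no padding is needed at all and the total number of distinct entries is $d+d_2=m+p+q\le n$; when $m+p+q\le d$ you are in the $d_2=0$ slice and only $n\ge d$ is required. With this allocation $\mathrm{rel}(f)\subseteq i\cup j$ and $\mathrm{rel}(g)\subseteq k\cup j$ still hold, so $F$ and $G$ are well defined and your path argument (some edge of the path in $B(i,j,k,z,A)$ has differently labeled endpoints, yielding $a\in A$ with $f(a)\ne g(a)$) goes through verbatim.
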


Let $H$ be a family of functions ${h : [n] \rightarrow [q]}$, and let be ${d \leq q}$. We say that $H$ is an $(n,q,d)$-{\it perfect hash family} ($(n,q,d)$-PHF) if for every subset ${S \subseteq [n]}$ of size ${|S| = d}$ there is a hash function ${h \in H}$ such that ${|h(S)| = d}$.

In~\cite{B15} Bshouty shows
\begin{lemma}~\cite{B15}
\label{PHF}
Let $q \ge 2d^2$. There is an $(n,q,d)$-PHF of size
$$ O\left(\frac{d^2\log{n}}{\log{(q/{d^2})}}\right)$$
that can be constructed in time $O(qd^2n\log{n}/\log{(q/{d^2})})$.
\end{lemma}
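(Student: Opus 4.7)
The plan is to prove existence by the probabilistic method and then convert to an explicit construction by the method of conditional expectations, with a pair-based potential that admits efficient updates. First I would show that, for a uniformly random $h : [n] \to [q]$ and any fixed $d$-subset $S$, the probability that $h$ is not injective on $S$ is at most $\binom{d}{2}/q \le d^2/(2q) \le 1/4$, using the hypothesis $q \ge 2d^2$. Taking $t$ independent such functions $h_1,\ldots,h_t$, the probability that none is injective on a given $S$ is at most $(d^2/(2q))^t$, and a union bound over $\binom{n}{d} \le n^d$ subsets shows that choosing $t \ge C\, d^2\log n/\log(q/d^2)$ for a suitable constant $C$ drives the total failure probability strictly below $1$, so a valid $(n,q,d)$-PHF of the claimed size exists. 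The extra factor of $d$ beyond the raw union-bound requirement is slack that will be consumed by the derandomization.

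Next, to make the family explicit, I would apply the method of conditional expectations. Enumerate the values $h_i(a) \in [q]$ for $i \in [t]$ and $a \in [n]$, and fix them one at a time; at each step choose the value in $[q]$ that does not increase the conditional potential $\Phi := \sum_{|S|=d}\Pr[\text{no }h_i\text{ is injective on }S]$, where the probability is over the still-unfixed $h_i(a)$ values assumed uniform and independent in $[q]$. By the probabilistic bound above $\Phi < 1$ at initialization, and the greedy rule only keeps it non-increasing, so at termination $\Phi < 1$; since $\Phi$ is then an integer count of bad subsets, $\Phi = 0$ and the resulting family is a valid PHF.

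The main obstacle is evaluating the conditional expectation cheaply enough to meet the $O(qd^2 n\log n/\log(q/d^2))$ time bound, which forbids iterating over all $\binom{n}{d}$ subsets. The key idea is to decompose injectivity on $S$ into the conjunction of separation of the $\binom{d}{2}$ pairs in $S$, and to maintain, for each pair $\{a,b\} \subseteq [n]$, the set of indices $i$ for which $h_i$ already separates $a$ from $b$. Using this pair-indexed table, one can express and update an upper-bound potential $\Phi'$ that dominates $\Phi$ (by the union bound over pairs) yet still starts below $1$; fixing one value $h_i(a)$ affects only the at most $n-1$ pairs touching $a$, and the best of the $q$ candidate values for $h_i(a)$ can be found in $O(q)$ amortized time per atomic choice. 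Summed over the $tn$ choices this yields a total construction time of $O(qtn) = O(qd^2 n\log n/\log(q/d^2))$, matching the claim. The delicate part is the bookkeeping: one must verify that the pair-based $\Phi'$ is indeed a valid supermartingale for the derandomization argument and that the amortization survives the correlations between pairs sharing an endpoint.
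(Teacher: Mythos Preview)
The paper does not supply its own proof of this lemma; it is quoted from~\cite{B15} and used as a black box, so there is nothing here to compare against and I can only assess your sketch on its merits. Your probabilistic existence argument is correct and standard (and in fact already gives size $O(d\log n/\log(q/d^2))$, a factor of $d$ better than stated).

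The derandomization has a real gap precisely where you label it ``delicate.'' You never define the surrogate $\Phi'$. The naive pair-indexed candidate $\sum_{\{a,b\}}\prod_i\Pr[h_i(a)=h_i(b)]$ being below~$1$ certifies only that every \emph{pair} is separated by some $h_i$---an $(n,q,2)$-PHF---not that a single $h_i$ separates an entire $d$-set; and the honest upper bound $\sum_{|S|=d}\prod_i\sum_{\{a,b\}\subset S}\Pr[h_i(a)=h_i(b)]$, while it does dominate $\Phi$, is still a sum over $\binom{n}{d}$ subsets with no pairwise factorization. A surrogate that \emph{does} work for the size bound is the Chernoff-type pessimistic estimator for the event ``pair $\{a,b\}$ is collided by at least $t/\binom{d}{2}$ of the $h_i$'' (if no pair is collided that often then, by a union bound over pairs in $S$, every $d$-set is separated by some $h_i$), and this does recover $t=O(d^2\log n/\log(q/d^2))$. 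But even with that potential your time accounting does not close: you acknowledge that fixing $h_i(a)$ touches the $n-1$ pairs through $a$, yet claim $O(q)$ amortized work per choice. Updating $\Theta(n)$ pair-terms already costs $\Theta(n)$, giving $\Theta(tn^2)$ overall, which exceeds the claimed $O(qtn)$ whenever $q<n$; and the weights on those pairs depend on the per-pair collision counts, which change with $a$, so they cannot be pre-aggregated into $q$ buckets once and for all. That aggregation step is the missing idea, not mere bookkeeping.
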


In particular, from Lemma~\ref{PHF}, for $q=d^2$ and $q=d^3$ we have
\begin{corollary}
\label{PHF2}
\begin{enumerate}
\item There is a construction of an $(n,d^2,d)$-PHF of size $O(d^2 \log{n})$ that runs in $poly(n)$ time.
\item There is a construction of an $(n,d^3,d)$-PHF of size $O(d^2 \log{n} / \log{d})$ that runs in $poly(n)$ time.
\end{enumerate}
\end{corollary}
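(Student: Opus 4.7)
The plan is to deduce both parts of Corollary~\ref{PHF2} by direct specialization of Lemma~\ref{PHF}: substitute a convenient value of the range parameter $q$, verify the hypothesis $q \ge 2d^2$, and simplify the $\log(q/d^2)$ factor that appears in both the size and the construction-time bounds.

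For part 2, I would set $q = d^3$, which satisfies $q \ge 2d^2$ whenever $d \ge 2$. Lemma~\ref{PHF} then immediately produces an $(n,d^3,d)$-PHF of size
$O(d^2 \log n / \log(d^3/d^2)) = O(d^2 \log n / \log d)$
and construction time $O(d^3 \cdot d^2 \cdot n \log n / \log d) = O(d^5 n \log n / \log d)$, which is $poly(n)$ under the paper's standing convention that $d$ is at most polynomial in $n$.

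For part 1 the natural choice is $q = 2d^2$, the smallest value consistent with the hypothesis of Lemma~\ref{PHF}. This gives a PHF of size $O(d^2 \log n / \log 2) = O(d^2 \log n)$ and construction time $O(2d^2 \cdot d^2 \cdot n \log n / \log 2) = O(d^4 n \log n)$, again $poly(n)$. The constant factor $2$ in the range is absorbed into the $d^2$ appearing in the statement, i.e.\ one reads the middle parameter in part 1 as $\Theta(d^2)$.

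There is no substantial obstacle, since the corollary is just the evaluation of Lemma~\ref{PHF} at two specific points. The only minor subtlety is that Lemma~\ref{PHF} excludes the boundary case $q = d^2$, where $\log(q/d^2) = 0$ would make the stated bound degenerate; this is precisely why we instantiate with $q = 2d^2$ in part 1 and with $q = d^3$ (well inside the admissible range) in part 2.
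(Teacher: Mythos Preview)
Your proposal is correct and follows the same approach as the paper, which simply states that the corollary follows ``from Lemma~\ref{PHF}, for $q=d^2$ and $q=d^3$.'' In fact you are more careful than the paper: you correctly observe that Lemma~\ref{PHF} requires $q\ge 2d^2$, so that the substitution $q=d^2$ is formally outside its scope, and you work around this by taking $q=2d^2$ for part~1 (yielding an $(n,2d^2,d)$-PHF, which suffices for all downstream uses in the paper).
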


\section{Deterministic Non-adaptive Algorithms}
In this section we study deterministic non-adaptive learning of $d$-Junta.

\subsection{Lower and Upper Bound}

For completeness sake, we give a sketch of the proof of the lower bound of $\Omega(2^d\log n)$ on
the number of queries in any deterministic adaptive algorithm. This implies the same lower bound for any deterministic non-adaptive algorithm.

The idea of the lower bound is very simple. If the set of asked assignments $A$ is not an $(n,d)$-universal set and the adversary answers $0$ for all the membership queries in $A$, the learner can't learn the function. This is because, if $A$ is not an $(n,d)$-universal set, then there are $1\le i_1<i_2<\cdots< i_d\le n$ and $\xi_1,\ldots, \xi_d$ such that no assignment $a \in A$ satisfies $(a_{i_1},\ldots,a_{i_d})=(\xi_1,\ldots,\xi_d)$. Then, the learner can't distinguish between the zero function and the term $x_{i_1}^{\xi_1}\cdots x_{i_d}^{\xi_d}$. This is because $x_{i_1}^{\xi_1}\cdots x_{i_d}^{\xi_d}$ is also zero on all the assignments of $A$. Therefore, the learner must ask at least $U(n,d)$ queries which is $\Omega(2^d\log n)$ by (\ref{LBUS}).

As for the upper bound, Damaschke shows in \cite{D00} that a $d$-wise bipartite connected family is an equivalent set for $d$-Junta and therefore this family is enough for non-adaptive learning. He then shows that there exists a $d$-wise bipartite connected family of size ${O(d2^{d}\log{n}+d^22^d)}$. In this section we construct such one of size $O(d2^d\log n)$. In particular, we have

\begin{theorem}\label{Theo1} There is a deterministic non-adaptive algorithm that learns $d$-Junta
with $O(d2^d\log n)$ queries.
\end{theorem}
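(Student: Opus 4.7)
The plan is to invoke Lemma~\ref{equi} and reduce Theorem~\ref{Theo1} to the combinatorial task of producing a $d$-wise bipartite connected family $A \subseteq \{0,1\}^n$ of size $m = O(d 2^d \log n)$; the non-adaptive learner then simply queries every $a \in A$ and outputs the unique function in $d$-Junta consistent with the answers, whose existence and uniqueness are guaranteed by the equivalent-set property.

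To exhibit such an $A$, I would use the probabilistic method, sampling $a^{(1)}, \ldots, a^{(m)} \in \{0,1\}^n$ independently and uniformly. Fix a tuple $(d_1, d_2, i, j, k, z)$ with $d_1 + d_2 = d$ and $(i,j,k)$ having pairwise-disjoint entries, and set $B := B(i,j,k,z,A)$, $N := 2^{d_2}$. The graph $B$ is disconnected iff some nonempty proper vertex subset $T = T_L \cup T_R$ has no edge leaving it; writing $t_L = |T_L|$, $t_R = |T_R|$, $s = t_L + t_R$, a single random assignment contributes a crossing edge with probability
\[
p(t_L, t_R) = \frac{t_L(N-t_R) + t_R(N-t_L)}{N^2 \cdot 2^{d_1}} = \frac{Ns - 2 t_L t_R}{N^2 \cdot 2^{d_1}},
\]
since each of the $d_1 + 2d_2 = d + d_2$ relevant coordinates is sampled independently and uniformly. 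Using $t_L t_R \le s^2/4$ and the cut-complement symmetry that lets us restrict to $s \le N$, one obtains $p(t_L, t_R) \ge s(2N-s)/(2N^2 \cdot 2^{d_1}) \ge s/2^{d+1}$, so
\[
\Pr[B \text{ disconnected}] \le \sum_{s=1}^{N} \binom{2N}{s}(1 - s/2^{d+1})^m \le \sum_{s=1}^{N} (2eN/s)^s \exp(-sm/2^{d+1}).
\]

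Setting $m = C d 2^d \log n$ for a sufficiently large absolute constant $C$ makes the $s$-th summand at most $((2eN/s) \cdot n^{-Cd/2})^s$, which decays geometrically in $s$; a final union bound over the at most $(d+1) \cdot 2^d \cdot n^{2d}$ choices of $(d_1, d_2, i, j, k, z)$ (each $d_1 \in \{0,\ldots,d\}$, ordered indices from $[n]^{d_1 + 2 d_2}$, and $z \in \{0,1\}^{d_1}$) then yields total failure probability below $1$, producing the desired family. The main obstacle I anticipate is that singleton cuts alone (of which there are $2N$) already force $m \ge \Omega(2^d(d \log n + d_2))$, which saturates the target in the worst case $d_2 = d$; what keeps the cut-counting argument from picking up a larger additive $d^2 2^d$ term --- as in Damaschke's earlier bound $O(d 2^d \log n + d^2 2^d)$ --- is the tight geometric decay of $(2eN/s)^s \exp(-sm/2^{d+1})$ in $s$, which ensures that the $s \ge 2$ contributions are uniformly dominated by the $s=1$ term and absorbed into the same $O(d 2^d \log n)$ query budget.
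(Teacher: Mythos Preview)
Your argument is correct and yields the same $O(d2^d\log n)$ bound, but it proceeds by a genuinely different route than the paper. The paper does not do a one-shot union bound over cuts; instead it defines a potential $X(A)=\sum_{i,j,k,z}(\text{\#components of }B(i,j,k,z,A)-1)$, starts from $X(\varnothing)\le (2n)^{2d}$, and shows that for any $A$ there exists an assignment $a$ with $X(A\cup\{a\})\le X(A)(1-2^{-(d+1)})$. The key inequality there is a bound on the probability that a uniformly random edge merges two components of a bipartite graph with component left/right sizes $(s_i,r_i)$, namely $\sum_{i\ne j}s_ir_j/2^{2d_2}\ge (t-1)/2^{d_2+1}$, proved via the extremal inequality $\max\sum_i s_ir_i\le 2^{2d_2}-(t-1)2^{d_2-1}$. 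Iterating $O(d2^d\log n)$ times drives the potential below~$1$.

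What each approach buys: your cut-counting argument is arguably more direct for pure existence and avoids the somewhat delicate extremal step on $\sum_i s_ir_i$. The paper's potential-decrement formulation, on the other hand, is tailor-made for the derandomization carried out immediately afterward in Lemma~\ref{detPoly} (Theorem~\ref{Theo2}): since $X_{i,j,k,z}$ depends on only $d_1+2d_2\le 2d$ coordinates of each assignment, one can replace the uniform choice of $a$ by a search over a $(2d)$-wise independent set (or use conditional expectations) and obtain an $n^{O(d)}$-time construction. Your union-bound-over-cuts estimator could also be derandomized in principle, but the number of events (all cuts in all graphs) is $2^{O(2^{d})}$, so turning it into an efficient pessimistic estimator would require additional work that the paper's component-count potential sidesteps.
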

\begin{proof} We give an algorithmic construction of a $d$-wise bipartite connected family of size $O(d2^d\log n)$.
To prove the result we start with a definition.

For every $d_1$ and $d_2$ where $d=d_1+d_2$ and every $i=(i_1,\ldots,i_{d_2})$, $j=(j_1,\ldots,j_{d_1})$, $k=(k_1,\ldots,k_{d_2})$, with entries that are distinct elements in $[n]$, every $z\in \{0,1\}^{d_1}$ and every set of assignments $A\subseteq\{0,1\}^n$, we define the function $X_{i,j,k,z}$ at $A$ as follows. $X_{i,j,k,z}(A)=t-1$ where $t$ is the number of connected components in $B(i,j,k,z,A)$. Obviously, if $X_{i,j,k,z}(A)=0$ then $B(i,j,k,z,A)$ is connected. Consider the function
$X(A)=\sum_{i,j,k,z} X_{i,j,k,z}(A).$ The sum here is over all possible $d_1$, $d_2$, $i,j,k$ and $z$ as described above.
Notice that if $X(A)=0$ then $A$ is a $d$-wise bipartite connected family.

We construct $A$ iteratively. At the beginning $A= \varnothing$ and each $B(i,j,k,z,A)$ has $2^{d_2+1}$ connected components. Therefore, we first have
$$X(\varnothing)=\sum_{d_1+d_2=d} \binom{n} {d_2\ d_2\ d_1 \ n-d_1-2d_2} 2^{d_1}   ({2^{d_2+1}}-1) \le (2n)^{2d}\enspace.$$
We show that for every $A\subset \{0,1\}^n$ there is an assignment $a\in \{0,1\}^n$ such that
\begin{eqnarray}\label{Red}
X(A\cup\{a\})\le X(A)\left(1-\frac{1}{2^{d+1}}\right)\enspace.
\end{eqnarray}
This implies that there is a set $A'$ of size $t=2d2^{d+1}\ln (2n)$ such that
$$X(A')\le X(\varnothing)\left(1-\frac{1}{2^{d+1}}\right)^t\le (2n)^{2d}\left(1-\frac{1}{2^{d+1}}\right)^t< (2n)^{2d}e^{-2d\ln (2n)}=1\enspace.$$
Since $X(A')$ is an integer number we get $X(A')=0$, which implies that $A'$ is a $d$-wise bipartite connected family of size $t=O(d2^d\log n)$.

We now prove (\ref{Red}).
Consider some $i,j,k,z,A$. Suppose that the number of connected components in $B:=B(i,j,k,z,A)$ is $t$ and therefore $X_{i,j,k,z}(A)=t-1$.
Let $C_1,C_2,\ldots,C_t$ be the connected components of $B$ and let $s_i$ and $r_i$
be the number of vertices of the component $C_i$ in $V_L(B)$ and $V_R(B)$ respectively.
Consider a random uniform assignment $a\in \{0,1\}^n$. If $(a_{j_1},\ldots,a_{j_{d_1}})=z$ then $B(i,j,k,z,A\cup\{a\})$
is the bipartite graph $B(i,j,k,z,A)$ with an addition of a uniform random edge.
Therefore the probability that after adding $a$ to $A$ the number of connected components in $B$ reduces by $1$ is
equal to the probability that $(a_{j_1},\ldots,a_{j_{d_1}})=z$ and a uniform random edge in $B$ connects two distinct connected components. This probability is equal to
\begin{eqnarray}
\frac{1}{2^{d_1}}\frac{\sum_{i\not=j}s_ir_j}{2^{2d_2}}&=&\frac{\left(\sum_i s_i\right) \left(\sum_j r_j\right)-\sum_is_ir_i}{2^{d_1}2^{2d_2}}
\nonumber \\ &=& \frac{1}{2^{d_1}}-\frac{\sum_i{s_ir_i}}{2^{d_1}2^{2d_2}}\label{S01}\\
&\ge& \frac{1}{2^{d_1}}-\frac{2^{2d_2}-(t-1)2^{d_2-1}}{2^{d_1}2^{2d_2}}=\frac{t-1}{2^{d+1}}\enspace.\label{S02}
\end{eqnarray}
Equality (\ref{S01}) is true because $\sum_i s_i=\sum_i r_i=2^{d_2}$. The inequality (\ref{S02}) is proved later.
\ignore{
first notice that for every $j$, either $s_j>0$ or $r_j>0$ and therefore (w.l.o.g.) at least $\lceil t/2\rceil$ of the $r_j$s are not zero.
Then
\begin{eqnarray*}
2^{2d_2}&=&(\sum_i{s_i})(\sum_j{r_j}) = \sum_i{s_ir_i}+\sum_i{(s_i(\sum_{j\not=i}{r_j}))} \\
&\ge& \sum_i{s_ir_i}+(\lceil t/2\rceil -1)\sum_i{s_i} \ge  \sum_i{s_ir_i} +(t-1)2^{d_2-1}.
\end{eqnarray*}}
Therefore $$\E_{a}[X_{i,j,k,z}(A\cup \{a\})]\le (t-1)-\frac{t-1}{2^{d+1}}=X_{i,j,k,z}(A)\left(1-\frac{1}{2^{d+1}}\right)\enspace.$$
Since the expectation of a sum is the sum of the expectations, we have
$$\E_{a}[X(A\cup \{a\})]\le X(A)\left(1-\frac{1}{2^{d+1}}\right)\enspace.$$
Therefore, for every set $A$ there exists an $a$ such that
$X(A\cup\{a\})\le X(A)\left(1-1/2^{d+1}\right).$

It remains to prove (\ref{S02}). That is,
\begin{eqnarray}\label{S03}
\max_{r_i,s_i}\sum_{i=1}^t s_ir_i\le 2^{2d_2}-(t-1)2^{d_2-1}\enspace.
\end{eqnarray}
First notice that since the graph $B$ is a bipartite graph we have that $s_ir_i=0$ if and only if either $s_i=0$ and $r_i=1$ or $s_i=1$ and $r_i=0$. We first claim that the maximum value in (\ref{S03}) occurs when $r_is_i=0$ for all $i$ except for one. If, on the contrary, the maximum value occurs where $s_{i_1}r_{i_1}\not=0$ and $s_{i_2}r_{i_2}\not=0$ for some $i_1\not=i_2$ then by replacing $s_{i_1},r_{i_1}$ by $0,1$ and $s_{i_2},r_{i_2}$ by $s_{i_1}+s_{i_2}, r_{i_1}+r_{i_2}-1$ we get a larger or equal value and therefore we get a contradiction. Therefore we may assume w.l.o.g. that $s_ir_i=0$ for all $i=1,\ldots,t-1$, $r_t=2^{d_2}-t_1$ and $s_t=2^{d_2}-t_2$ for some $t_1+t_2=t-1$. Then, since $t\le 2^{d_2+1}$,
\begin{eqnarray*}
\max_{r_i,s_i}\sum_{i=1}^t s_ir_i&=&\max_{t_1+t_2=t-1}(2^{d_2}-t_1)(2^{d_2}-t_2)\\
&=&2^{2d_2}-(t-1)2^{d_2}+\max_{t_1+t_2=t-1}t_1t_2\\
&\le& 2^{2d_2}-(t-1)2^{d_2}+\frac{(t-1)^2}{4}\le 2^{2d_2}-(t-1)2^{d_2-1}\enspace.
\end{eqnarray*}
\end{proof}

\subsection{Polynomial Time Algorithms}

In this section we give three polynomial time algorithms for non-adaptive learning of $d$-Junta. The first algorithm asks $O(d2^d\log n)$ queries and runs in time $n^{O(d)}$ which is polynomial for constant $d$. This improves the query complexity $O(d^22^d\log n)$ of Damaschke in \cite{D00}.
The second algorithm asks $O(d^32^d\log n)$ queries and runs in time $poly(d^d,n)$ which is polynomial for $d=O(\log n/$ $\log\log n)$. The third algorithm asks $2^{d+O(\log^2 d)}\log n$ queries and runs in polynomial time. This improves the query complexity $2^{d+O(\log^2 d)}\log^2 n$ of Damaschke in \cite{D00}.

\vspace{\baselineskip}
We now present the first algorithm. In the next lemma we show how to construct a $d$-wise bipartite connected family of size $O(d2^d\log n)$ in time $n^{O(d)}$. We construct a $d$-wise bipartite connected family $A$ and non-adaptively ask all the queries in $A$. Then, for every $d$ variables $x_{i_1},\ldots,x_{i_d}$ we construct a set $$M_{i_1,\ldots,i_d}=\{(a_{i_1},\ldots,a_{i_d},f(a))\ |\ a\in A\}\enspace.$$ We now look for a function $g \in d$-Junta that is consistent with all the answers of the queries in $A$. If
$M_{i_1,\ldots,i_d}$ contains two elements $(a',0)$ and $(a',1)$ for some $a'\in \{0,1\}^d$ then no consistent function exists on those variables. Otherwise, there is a consistent function $g$ and since $A$ is an equivalent set, $g$ is unique and is the target function. After finding the target function, we can then find the set of relevant variables in $g$ from its truth table if needed. This set of relevant variables is a subset of $\{i_1,\ldots,i_d\}$.

\ignore{
We start with a definition: for a fixed function $f$ with $n$ variables $x=\{x_1,\dots,x_n\}$ and a fixed set of assignments $A \in \{0,1\}^n$, a subset $u \subseteq x$ is called {\it $(A,f)$-feasible} if all assignments $a \in A$ agreeing on $u$ yield the same $f(a)$.

Damaschke's proof of Lemma~\ref{equi}, in \cite{D00}, provides a non-adaptive algorithm for learning: for a function $f \in d$-Junta and a $d$-wise bipartite connected family $A$, find the intersection $F$ of all $(A,f)$-feasible sets of size at most $d$. $F$ is equal to the set of relevant variables $D$. Now, since $A$ is $(n,d)$-universal, knowing $D$ is enough for learning.
Finding $F$ can be done in $n^{O(d)}$ time, by checking all subsets of size at most $d$.}

This algorithm runs in time $n^{O(d)}$. We now show that the construction time of the $d$-wise bipartite connected family $A$ is $n^{O(d)}$.

In the previous subsection we showed in Theorem~\ref{Theo1} an algorithmic construction of a $d$-wise bipartite connected family of size $O(d2^{d}\log{n})$. We now show that this construction can be performed in $n^{O(d)}$ time.

\begin{lemma}
\label{detPoly}
There is an algorithm that runs in time $n^{O(d)}$ and constructs a $d$-wise bipartite connected family of size $O(d2^{d}\log{n})$.
\end{lemma}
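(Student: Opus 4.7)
The plan is to derandomize the probabilistic construction from Theorem~\ref{Theo1} by replacing the uniform distribution on $\{0,1\}^n$ with a small $2d$-wise independent sample space, then running a greedy algorithm that exhaustively searches this space at each iteration. The crucial observation is that the inequality $\E_a[X(A\cup\{a\})]\le X(A)(1-1/2^{d+1})$ established in the proof of Theorem~\ref{Theo1} only uses the joint distribution of $a$ on the $2d_2+d_1\le 2d$ coordinates $i_1,\ldots,i_{d_2},j_1,\ldots,j_{d_1},k_1,\ldots,k_{d_2}$, so the same bound holds for any distribution that is $2d$-wise independent.

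First I would invoke Lemma~\ref{ABI} (with parameter $2d$ in place of $d$) to obtain a $2d$-wise independent set $T\subseteq\{0,1\}^n$ of size $|T|=O((2n)^d)=n^{O(d)}$, constructible in $n^{O(d)}$ time. Because the marginal distribution of a uniformly chosen $a\in T$ on any $2d$ coordinates is uniform on $\{0,1\}^{2d}$, the expectation argument of Theorem~\ref{Theo1} goes through verbatim with the expectation taken over $T$ rather than over $\{0,1\}^n$: for every current set $A$, there exists $a\in T$ with $X(A\cup\{a\})\le X(A)(1-1/2^{d+1})$.

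Next, I would run the greedy algorithm: start with $A_0=\varnothing$, and at each step $\ell$ enumerate every $a\in T$, compute $X(A_\ell\cup\{a\})$, and set $A_{\ell+1}=A_\ell\cup\{a^\ast\}$ for a minimizer $a^\ast$. By the expectation bound, the chosen $a^\ast$ satisfies the multiplicative decrease, so after $t=O(d2^d\log n)$ steps the same potential computation as in Theorem~\ref{Theo1} gives $X(A_t)<1$, and since $X(A_t)$ is a non-negative integer, $X(A_t)=0$. Hence $A_t$ is a $d$-wise bipartite connected family of size $O(d2^d\log n)$.

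For the running time: $X(\cdot)$ is a sum of $X_{i,j,k,z}(\cdot)$ over $n^{O(d)}$ tuples $(d_1,d_2,i,j,k,z)$, and for each tuple the number of connected components of $B(i,j,k,z,A)$ can be computed in time $\mathrm{poly}(|A|,2^d)$ via BFS or union-find on at most $2^{d_2+1}$ vertices. Thus evaluating $X(A_\ell\cup\{a\})$ for a single candidate $a$ takes $n^{O(d)}$ time, evaluating all $a\in T$ takes $|T|\cdot n^{O(d)}=n^{O(d)}$ per iteration, and multiplying by the $O(d2^d\log n)$ iterations still gives $n^{O(d)}$ overall. The only nontrivial step is checking that $2d$-wise independence suffices, and this is immediate from the fact that each summand $X_{i,j,k,z}(A\cup\{a\})$ depends on $a$ only through its values on the $2d$ fixed coordinates indexed by $i$, $j$, and $k$.
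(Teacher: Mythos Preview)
Your proposal is correct and follows essentially the same approach as the paper: both derandomize the greedy construction of Theorem~\ref{Theo1} by observing that each $X_{i,j,k,z}(A\cup\{a\})$ depends on at most $2d_2+d_1\le 2d$ coordinates of $a$, replacing the uniform distribution with a $k$-wise independent sample space from Lemma~\ref{ABI}, and then exhaustively searching that space at each iteration. The only cosmetic difference is that you take $k=2d$ throughout while the paper notes $k=2d_2+d_1$ suffices for each individual summand; since $2d_2+d_1\le 2d$ this makes no difference to the $n^{O(d)}$ bound.
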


\begin{proof} Let $X(A)$ and $X_{i,j,k,z}(A)$ be as in the proof of Theorem~\ref{Theo1}. $X_{i,j,k,z}(A)$ depends only on $2d_2+d_1$ entries of the assignments of $A$. Therefore, the proof of Theorem~\ref{Theo1} remains true if the new assignment $a$ is chosen from a $(2d_2+d_1)$-wise independent set $S$. By Lemma~\ref{ABI}, such set exists and is of size $n^{O(2d_2+d_1)}=n^{O(d)}$.

Since the number of iterations and the number of random variables $X_{i,j,k,z}(A)$ in the algorithm is at most $(2n)^{2d}$ and each $X_{i,j,k,z}(A)$ can be computed in time $poly(2^{d_2},|A|)=poly(2^d\log n)$, the result follows.
\end{proof}

 \noindent
 {\bf Remark.} We note here that instead of using a $(2d_2+d_1)$-wise independent set, one can use a $(1/2^{O(d)})$-almost $(2d_2+d_1)$-wise independent set of size $poly(2^d,\log n)$,~\cite{B16}. For $k=2d_2+d_1$ this is a set of assignments $S\subseteq \{0,1\}^n$ such that for every $1\le i_1< i_2<\cdots<i_k\le n$ and every $B\subseteq  \{0,1\}^n$, for a random uniform $x=(x_1,\ldots,x_n)\in S$ we have $|B|/2^{k}-1/2^{O(k)}\le \Pr [(x_{i_1},\ldots,x_{i_k})\in B]\le |B|/2^{k}+1/2^{O(k)}.$ The algorithm still needs time $n^{O(d)}$ for computing~$X(A)$.

 Another approach is the conditional probability method \cite{MR95}. It follows from the proof of Theorem~\ref{Theo1} that for $t=d2^{d+2}\ln (2n)$ i.i.d. random uniform assignments $A=\{a^{(1)},\ldots,a^{(t)}\}$ we have $\E[X(A)]<1$. We now construct the bits of $a^{(1)},\ldots,a^{(t)}$ one at a time while maintaining the property $\E[X(A)|$ {\it already fixed bits}$]<1$. At the end all the bits are fixed, say $A=A_0$, and then $X(A_0)=\E[X(A)|A_0]<1$ which implies (you can see why in the proof of Theorem~\ref{Theo1}) that $A_0$ is a $d$-wise bipartite connected family of size $t=d2^{d+2}\ln (2n)$. In this approach also the algorithm still needs time $n^{O(d)}$ for computing~$X(A)$.

Lemma~\ref{detPoly} implies
\begin{theorem}\label{Theo2} There is a deterministic non-adaptive algorithm that learns $d$-Junta
in time $n^{O(d)}$ with $O(d2^d\log n)$ queries.
\end{theorem}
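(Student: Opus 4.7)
The plan is to assemble the algorithm sketched in the paragraph preceding the theorem and verify that each of its components runs within the stated time and query bounds.

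First, I would invoke Lemma~\ref{detPoly} to construct, in $n^{O(d)}$ time, a $d$-wise bipartite connected family $A \subseteq \{0,1\}^n$ of size $|A| = O(d2^d\log n)$. The learner then non-adaptively asks the membership queries on every $a \in A$ and records the answers $\{(a, f(a)) : a \in A\}$. This immediately gives the required query bound $O(d2^d\log n)$.

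Next, given the answers, I would describe the learning procedure as follows. For each of the $\binom{n}{d} = n^{O(d)}$ candidate sets $\{i_1,\dots,i_d\}$ of potential relevant variables, form the projection set
$$M_{i_1,\ldots,i_d} = \{(a_{i_1},\ldots,a_{i_d}, f(a)) : a\in A\}.$$
The set $\{i_1,\ldots,i_d\}$ is consistent with some junta iff $M_{i_1,\ldots,i_d}$ does not contain a pair of tuples $(\alpha, 0)$ and $(\alpha, 1)$ with the same $\alpha \in \{0,1\}^d$; if it is consistent, the projection uniquely determines a candidate $d$-junta $g$ (completed arbitrarily on inputs not seen). Computing $M_{i_1,\ldots,i_d}$ and checking consistency takes $O(d \cdot |A|) = \mathrm{poly}(n, 2^d)$ time per candidate set, so the total postprocessing runs in $n^{O(d)}$ time.

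For correctness, the target $f \in d$-Junta itself yields at least one consistent set (its set of relevant variables), so the search succeeds. Moreover, by Lemma~\ref{equi}, $A$ being a $d$-wise bipartite connected family means $A$ is an equivalent set for $d$-Junta; hence any two consistent $d$-juntas must agree on every $a \in A$ and therefore coincide as functions. Thus whichever consistent $g$ the algorithm returns equals $f$, and the relevant variables can then be read off from the truth table of $g$ if needed.

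The only step that requires nontrivial work is the deterministic construction of the bipartite connected family of the right size inside $n^{O(d)}$ time, but this has already been packaged as Lemma~\ref{detPoly}; everything else is bookkeeping. Combining the query bound from Theorem~\ref{Theo1}, the construction time from Lemma~\ref{detPoly}, and the $n^{O(d)}$ enumeration over candidate variable sets gives the claimed $n^{O(d)}$ total running time with $O(d2^d\log n)$ queries.
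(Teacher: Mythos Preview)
Your proposal is correct and follows essentially the same approach as the paper: construct the $d$-wise bipartite connected family via Lemma~\ref{detPoly}, ask all queries, then enumerate over the $\binom{n}{d}$ candidate variable sets and check consistency of the projected data, with uniqueness guaranteed by Lemma~\ref{equi}. One minor remark: your parenthetical ``completed arbitrarily on inputs not seen'' is unnecessary, since any $d$-wise bipartite connected family is in particular an $(n,d)$-universal set, so every pattern in $\{0,1\}^d$ already appears among the projections.
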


For introducing the second algorithm, we start with presenting a result from~\cite{ABM}. A class of boolean functions $C$ is said to be {\it closed under variable projection} if for every
$f\in C$ and every function $\phi:[n]\to [n]$ we have $f(x_{\phi(1)},\ldots,x_{\phi(n)})\in C$.
Obviously, $d$-Junta is closed under variable projection.

For the second algorithm we apply the reduction described in \cite{ABM} to the above algorithm. We first give the reduction.

\begin{lemma}\label{DNtoDN}\cite{ABM} Let $C$ be a class of boolean functions that is closed
under variable projection.
If $C$ is non-adaptively learnable
in time $T(n)$ with $Q(n)$ membership queries, then $C$ is non-adaptively
learnable in time $$O\left(qd^2n\log n+\frac{d^2\log n}{\log (q/(d+1)^2)}(T(q)+Q(q)n)\right)$$
with $$O\left(\frac{d^2Q(q)}{\log (q/(d+1)^2)}\log n\right)$$ membership queries,
where $d$ is an upper bound on the number of relevant variables in $f\in C$ and
$q$ is any integer such that $q\ge 2(d+1)^2$.
\end{lemma}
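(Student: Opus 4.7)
The plan is to combine the non-adaptive learner for $C$ on $q$ variables with a perfect hash family that lifts the reduced learner back to $n$ variables. First I would invoke Lemma~\ref{PHF} with parameter $d+1$ (which matches the hypothesis $q\ge 2(d+1)^2$) to construct an $(n,q,d+1)$-PHF $H$ of size $|H|=O(d^2\log n/\log(q/(d+1)^2))$ in time $O(qd^2 n\log n/\log(q/(d+1)^2))$. For each $h\in H$ I consider the projected $q$-variable function $f_h(y_1,\ldots,y_q):=f(y_{h(1)},\ldots,y_{h(n)})$. Closure of $C$ under variable projection guarantees $f_h\in C$, and since $f$ has relevant set $S$ with $|S|\le d$, so does $f_h$: its relevant coordinates form a subset of $h(S)\subseteq[q]$.

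Next I would simulate the base non-adaptive learner on each $f_h$ in parallel. Whenever the learner wants to query $f_h$ at $y\in\{0,1\}^q$, I ask $f$ at the lifted assignment $a_{h,y}\in\{0,1\}^n$ defined by $(a_{h,y})_i=y_{h(i)}$, so that the answer returned is exactly $f_h(y)$. Running the learner on $f_h$ yields a hypothesis $g_h=f_h$ at cost $T(q)$ per hash function, with an additional $O(Q(q)\cdot n)$ per hash function for translating queries. Summed over $H$, this matches the claimed running time after adding the PHF construction cost, and uses $|H|\cdot Q(q)=O(d^2 Q(q)\log n/\log(q/(d+1)^2))$ queries overall. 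The final hypothesis is then the unique $\tilde f\in C$ whose projections satisfy $\tilde f_h=g_h$ for every $h\in H$.

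The heart of the argument is showing this $\tilde f$ is indeed uniquely determined, so that the algorithm is well-defined and outputs $f$. Existence is immediate since $f$ itself qualifies. For uniqueness, suppose $\tilde f\in C$ with relevant set $\tilde S$ of size at most $d$ satisfies $\tilde f_h=f_h$ for every $h\in H$. If $\tilde S\ne S$, I would pick $i\in \tilde S\triangle S$, say $i\in\tilde S\setminus S$, and invoke the $(d+1)$-wise PHF property on $S\cup\{i\}$ to obtain $h^*\in H$ injective on this set. Then $h^*(i)\notin h^*(S)$, so $h^*(i)$ is irrelevant in $f_{h^*}$; meanwhile, the isolation of $i$ in its collision class within the relevant set of $\tilde f$ forces $h^*(i)$ to be relevant in $\tilde f_{h^*}$. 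This contradicts $\tilde f_{h^*}=f_{h^*}$, giving $\tilde S=S$; picking then any $h^*$ injective on $S$ and lifting back pointwise yields $\tilde f=f$.

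The main obstacle is that last relevance step for $\tilde f_{h^*}$. A priori, variables in $\tilde S\setminus(S\cup\{i\})$ could also collide with $i$ under $h^*$ (since $h^*$ is only guaranteed injective on $S\cup\{i\}$) and, for cancellative structures in $\tilde f$, could annihilate the contribution of $x_i$ inside $\tilde f_{h^*}$. Bridging this gap requires exploiting the PHF property over all extensions of $S$ by a single variable, and iterating the contradiction across each pair $i,i'\in\tilde S\triangle S$ so that any residual collision is itself exposed by another hash function in $H$. Once this combinatorial step is in place, the rest of the proof is routine bookkeeping of the query counts and running times against Lemma~\ref{PHF}.
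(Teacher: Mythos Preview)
The paper does not prove this lemma; it is quoted from \cite{ABM} without proof. So there is no in-paper argument to compare against, and I will assess your outline on its own.

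Your architecture---build an $(n,q,d+1)$-PHF via Lemma~\ref{PHF}, run the base learner on each projection $f_h$, then reconstruct $f$---is the right one, and the query and time bookkeeping you give matches the statement. Two issues remain.

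First, the uniqueness gap you flag is real as written, but your proposed fix (``iterating the contradiction across each pair $i,i'\in\tilde S\triangle S$'') is too vague. The clean way through is a cardinality bootstrap. Assuming $\tilde f_h=f_h$ for all $h\in H$, pick $h$ injective on $S$; then $f_h$ has exactly $|S|$ relevant coordinates, so $\tilde f_h$ does too, forcing $|\tilde S|\ge|h(\tilde S)|\ge|S|$. Symmetrically $|S|\ge|\tilde S|$, so $|S|=|\tilde S|$. Now take $i\in\tilde S\setminus S$ and $h^*$ injective on $S\cup\{i\}$. Then $f_{h^*}$ has relevant set $h^*(S)$ of size $|S|$, hence $\tilde f_{h^*}$ has $|S|=|\tilde S|$ relevant coordinates; since these lie in $h^*(\tilde S)$, $h^*$ is injective on $\tilde S$ and $h^*(\tilde S)=h^*(S)$. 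But then $h^*(i)\in h^*(S)$ with $h^*$ injective on $S\cup\{i\}$ forces $i\in S$, a contradiction. No cancellation worry survives this argument.

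Second, and more important for the stated time bound, ``output the unique $\tilde f\in C$ consistent with all $g_h$'' is not an algorithm; a brute search over $C$ does not fit the claimed running time. You need a constructive recovery. The standard one is: let $d'$ be the maximum number of relevant coordinates among the learned $f_h$, and let $H^*\subseteq H$ be those $h$ achieving $d'$ (equivalently, those injective on $S$). Then declare $x_i$ relevant iff $y_{h(i)}$ is relevant in $f_h$ for \emph{every} $h\in H^*$. The forward direction is immediate; for the reverse, if $i\notin S$ then some $h\in H$ is injective on $S\cup\{i\}$, this $h$ lies in $H^*$, and $h(i)\notin h(S)$ makes $y_{h(i)}$ irrelevant in $f_h$. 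This pass costs $O(n\cdot|H|)$ and is absorbed by the $|H|\cdot Q(q)\cdot n$ term. Once $S$ is known, any $h^*\in H^*$ lets you read off $f$ from $f_{h^*}$.
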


We now prove
\begin{theorem}\label{Theo3} There is a deterministic non-adaptive algorithm that learns $d$-Junta
in time $poly(d^d,n)$ with $O(d^32^d\log n)$ queries.
\end{theorem}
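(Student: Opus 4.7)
The plan is to invoke the reduction of Lemma~\ref{DNtoDN} on the deterministic non-adaptive algorithm from Theorem~\ref{Theo2}, which learns $d$-Junta in time $T(n)=n^{O(d)}$ with $Q(n)=O(d2^d\log n)$ queries. Since $d$-Junta is trivially closed under variable projection (composing a $d$-junta with a variable-renaming map yields a function that still depends on at most $d$ variables), the lemma is directly applicable, and the only real decision is the choice of the parameter $q$.

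The key step is to pick $q$ of the order $d^3$. For such a $q$, we have $q\ge 2(d+1)^2$ as required and $\log(q/(d+1)^2)=\Theta(\log d)$, while $T(q)=q^{O(d)}=d^{O(d)}=poly(d^d)$ and $Q(q)=O(d 2^d\log q)=O(d2^d\log d)$. Plugging these into the query bound of Lemma~\ref{DNtoDN} yields
\[
O\!\left(\frac{d^2 Q(q)}{\log(q/(d+1)^2)}\log n\right)=O\!\left(\frac{d^2\cdot d2^d\log d}{\log d}\log n\right)=O(d^3 2^d\log n),
\]
matching the claimed query complexity. The running time in Lemma~\ref{DNtoDN} becomes
\[
O\!\left(qd^2 n\log n + \frac{d^2\log n}{\log d}\bigl(T(q)+Q(q)n\bigr)\right)=poly(d^d,n),
\]
since $qd^2 n\log n=O(d^5 n\log n)$ and the dominant term $d^{O(d)}$ absorbs everything else into $poly(d^d,n)$.

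There is no real obstacle here beyond bookkeeping: the algorithmic content is inherited from Theorem~\ref{Theo2} and the reduction machinery of Lemma~\ref{DNtoDN}, so the proof reduces to recording the chosen value of $q$ and simplifying the resulting expressions. The only point that deserves a brief mention is that the reduction preserves non-adaptivity, so the resulting algorithm is genuinely non-adaptive as claimed.
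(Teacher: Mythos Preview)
Your proposal is correct and essentially identical to the paper's own proof: the paper also applies Lemma~\ref{DNtoDN} to the algorithm of Theorem~\ref{Theo2} with $q=2d(d+1)^2=\Theta(d^3)$ and carries out the same bookkeeping to obtain the stated query and time bounds. There is nothing to add.
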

\begin{proof} We apply the reduction in Lemma~\ref{DNtoDN} on the result from Theorem~\ref{Theo2} with $q=2d(d+1)^2$, $Q(n)=O(d2^{d}\log{n})$ and $T(n)=n^{O(d)}$. The time complexity is
$$O(2d(d+1)^2d^2n\log{n}+(d^2\log{n} / \log{d}) \cdot ((2d(d+1)^2)^{O(d)} +d2^d\log{(2d(d+1)^2)}n))$$
which is $poly(d^d,n)$, and the query complexity is
$$O(d^2 d2^d\log{(2d(d+1)^2)}\log{n} / \log{d})=O(d^32^d\log{n})\enspace.$$
\end{proof}

We now present the third algorithm. We first give a simple algorithm that runs in polynomial time and uses $2^{d+O(\log^{2}{d})}n\log{n}$ queries, and then we use the reduction in Lemma~\ref{DNtoDN}.

The algorithm first constructs an $(n,d)$-universal set $U$. Then, the algorithm replaces each assignment $a \in U$ by a block of ${n+1}$ assignments in the following way: it keeps the original assignment. In addition, for each index ${1 \leq i \leq n}$, it adds the assignment ${a|_{{x_i} \gets {\bar{{a_i}}}}}$ where $x_i$ is the $i$-th variable of the target function. Denote the new set of assignments by $U'$. After asking $U'$, we can find the set of relevant variables by comparing the value of the target function $f$ on the first assignment in each block with the value of $f$ on each one of the other assignments in the block. Since $U$ is a universal set, we can now find the function.

Now, we use a polynomial time construction of an $(n,d)$-universal set of size \\$2^{d+O(\log^{2}{d})}\log{n}$, as described in Lemma~\ref{LemmaUS}, and apply the reduction in Lemma~\ref{DNtoDN} to this algorithm for $q=2(d+1)^2$. We get a new non-adaptive algorithm that runs in polynomial time and asks $2^{d+O(\log^{2}{d})}\log{n}$ queries.

We now give a formal proof.
\begin{theorem}\label{Theo4} There is a deterministic non-adaptive algorithm that learns $d$-Junta
in polynomial time with $2^{d+O(\log^{2}{d})}\log n$ queries.
\end{theorem}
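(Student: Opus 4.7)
The plan is to carry out the two-stage strategy sketched in the paragraph preceding the theorem: first build a simple non-adaptive learner whose query count is $(n{+}1)$ times the size of an $(n,d)$-universal set, then feed it into the reduction of Lemma~\ref{DNtoDN} with $q=2(d+1)^2$ to strip the extra factor of $n$ while staying in polynomial time.

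For the simple learner, let $U$ be an $(n,d)$-universal set of size $|U|=2^{d+O(\log^2 d)}\log n$ constructible in $poly(2^d,n)$ time, as guaranteed by Lemma~\ref{LemmaUS}(1). Define
$$U' \;=\; \bigcup_{a\in U}\Bigl(\{a\}\cup\bigl\{\,a|_{x_i\gets \bar{a_i}} : 1\le i\le n\,\bigr\}\Bigr),$$
so that $|U'|\le (n+1)|U|=2^{d+O(\log^2 d)}\,n\log n$. After querying $U'$, for each $a\in U$ and each $i\in[n]$ I compare $f(a)$ with $f(a|_{x_i\gets\bar{a_i}})$; any discrepancy certifies that $x_i$ is relevant, and conversely every relevant $x_i$ is sensitive with respect to some partial assignment to the (at most $d$) relevant variables, whence $(n,d)$-universality of $U$ supplies an $a\in U$ whose block exhibits a discrepancy at coordinate $i$. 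This identifies the relevant set $R$ exactly, and because $U$ is universal on $R$ the answers of $f$ on $U$ already contain the $2^{|R|}$-entry truth table of $f$. The whole algorithm runs in $poly(2^d,n)$ time.

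Now I apply Lemma~\ref{DNtoDN} with $q=2(d+1)^2$, $T(n)=poly(2^d,n)$, and $Q(n)=2^{d+O(\log^2 d)}\,n\log n$. Since $\log(q/(d+1)^2)=\log 2=\Theta(1)$ and
$$Q(q) \;=\; 2^{d+O(\log^2 d)}\cdot q\log q \;=\; 2^{d+O(\log^2 d)}\cdot d^{O(1)},$$
the reduction yields query complexity
$$O\!\left(\frac{d^2\,Q(q)}{\log(q/(d+1)^2)}\log n\right) \;=\; 2^{d+O(\log^2 d)}\log n,$$
after absorbing the $d^{O(1)}=2^{O(\log d)}$ pre-factors into $2^{O(\log^2 d)}$. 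The total running time stays in $poly(2^d,n)$ because the overhead terms $qd^2n\log n$ and $\tfrac{d^2\log n}{\log(q/(d+1)^2)}\bigl(T(q)+Q(q)n\bigr)$ from Lemma~\ref{DNtoDN} contribute only polynomial factors in $n$ together with a $2^{d+O(\log^2 d)}$ factor in $d$.

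The only piece of bookkeeping that deserves care, and which I expect to be the main (very mild) obstacle, is verifying that the polynomial-in-$d$ slack introduced by the reduction really does disappear into the $2^{O(\log^2 d)}$ term of the exponent; since $d^{O(1)}=2^{O(\log d)}\subseteq 2^{O(\log^2 d)}$, this is routine. Correctness of the learner itself reduces to the defining property of universal sets applied twice, once for detecting each relevant variable via a single-coordinate flip and once for reading off the truth table on the identified relevant set.
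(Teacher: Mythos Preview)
Your proposal is correct and follows essentially the same approach as the paper's proof: build the naive $(n{+}1)|U|$-query non-adaptive learner from a polynomial-time $(n,d)$-universal set of size $2^{d+O(\log^2 d)}\log n$, then apply the reduction of Lemma~\ref{DNtoDN} with $q=2(d+1)^2$ and absorb the resulting $d^{O(1)}$ factors into the $2^{O(\log^2 d)}$ term. Your bookkeeping for both the query and time bounds matches the paper's.
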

\begin{proof} Consider the above algorithm. Let $f$ be the target function and let $x_{i_1},\ldots,x_{i_{d'}}$, $d'\le d$ be the relevant variables. Then, we can write $f=g(x_{i_1},\ldots,x_{i_{d'}})$.
For any $1 \le j \le d'$, since $x_{i_j}$ is a relevant variable, there is an assignment $b\in \{0,1\}^{d'}$ such that $g(b)\not=g(b|_{x_{i_j}\gets \bar b_{i_j}})$. Since $U$ is an $(n,d)$-universal set, there is an assignment $a \in U$ such that $(a_{i_1},\ldots,a_{i_{d'}})=(b_{i_1},\ldots,b_{i_{d'}})$. Therefore
$$f(a|_{x_{i_j}\gets \bar a_{i_j}})=g(b|_{x_{i_j}\gets \bar b_{i_j}})\not= g(b)=f(a)\enspace.$$
This shows that the above algorithm can discover all the relevant variables after asking the set of assignments $U'$.

Since $U$ is an $(n,d)$-universal set, by looking at the entries of the relevant variables in $U$ we can find all the possible assignments for $x_{i_1},\ldots,x_{i_{d'}}$. Therefore, $g$, and consequently $f$, can be uniquely determined.

This algorithm asks $2^{d+O(\log^{2}{d})}n\log{n}$ queries and runs in polynomial time.
Finally, we use the reduction in Lemma~\ref{DNtoDN} with $q=2(d+1)^2$, $Q(n)=2^{d+O(\log^{2}{d})}n\log{n}$ and $T(n)=poly(2^d,n)$, and this completes the proof.
\end{proof}

\section{Randomized Non-adaptive Algorithms}
In this section we study randomized non-adaptive learning of $d$-Junta.

\subsection{Lower Bound}

The lower bound for deterministic algorithms does not imply the same lower bound for randomized algorithms. To prove a lower bound for randomized non-adaptive algorithms we use the minimax technique. We prove
\begin{theorem} Let $d<n/2$. Any Monte Carlo randomized non-adaptive algorithm for learning $d$-Junta must ask at least $\Omega(2^d\log n)$ membership queries.
\end{theorem}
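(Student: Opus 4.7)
The plan is to invoke Yao's minimax principle (see \cite{MR95}): to prove a randomized lower bound of $\Omega(2^d\log n)$ it suffices to exhibit a distribution $\mathcal{D}$ on $d$-Junta and a constant $\epsilon>0$ such that every deterministic non-adaptive algorithm making fewer than $c\cdot 2^d\log n$ queries errs on $T\sim\mathcal{D}$ with probability at least $\epsilon$. I take $\mathcal{D}$ to be uniform on the $N=\binom{n}{d}2^d$ terms $T_{S,\sigma}=\prod_{j=1}^{d}x_{i_j}^{\sigma_j}$, indexed by $S=\{i_1,\ldots,i_d\}\in\binom{[n]}{d}$ and $\sigma\in\{0,1\}^d$; each such term lies in $d$-Junta and distinct pairs $(S,\sigma)$ define distinct functions.

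Fix a deterministic non-adaptive algorithm with query set $A=\{a^{(1)},\ldots,a^{(q)}\}$. On the target $T_{S,\sigma}$ it sees only the response vector $\vec r\in\{0,1\}^q$ with $r_j=\mathbf{1}[a^{(j)}_S=\sigma]$ and outputs a function of $\vec r$ alone, so at most one target per observed response vector can be output correctly. The naive bound from $|\{0,1\}^q|=2^q$ yields only $q=\Omega(\log N)=\Omega(d\log n)$, so the crucial step is to exploit the sparsity of $\vec r$ under $\mathcal{D}$. For any fixed $a^{(j)}$ one has $\Pr_{(S,\sigma)}[a^{(j)}_S=\sigma]=1/2^d$; hence each coordinate of $\vec r$ is marginally Bernoulli$(1/2^d)$ and $\E[\mathrm{wt}(\vec r)]=q/2^d$. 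Markov's inequality says the set $L$ of terms whose response vector has Hamming weight at most $3q/2^d$ carries $\mathcal{D}$-mass at least $2/3$, while the number of $\{0,1\}^q$-vectors of weight at most $3q/2^d$ is bounded by
$$ (3q/2^d+1)\binom{q}{\lceil 3q/2^d\rceil}\le 2^{O(qd/2^d)}, $$
using the standard estimate $\binom{q}{k}\le (eq/k)^k$.

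Combining the two, at most $2^{O(qd/2^d)}$ elements of $L$ are correctly identified, and the success probability under $\mathcal{D}$ is at most $\tfrac{1}{3}+2^{O(qd/2^d)}/N$. For this to exceed $\tfrac{2}{3}$ one needs $2^{\Omega(qd/2^d)}\ge N$, i.e.\ $qd/2^d=\Omega(\log N)$; together with $\log N=\log\binom{n}{d}+d\ge d\log(n/d)$ and the hypothesis $d<n/2$, this yields $q=\Omega(2^d\log(n/d))=\Omega(2^d\log n)$ in the paper's regime of interest, and Yao's principle transfers the bound to every Monte Carlo randomized non-adaptive algorithm.

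The main technical obstacle I anticipate is making the last step $\log N=\Omega(d\log n)$ uniform in $d<n/2$: the cheap estimate $\binom{n}{d}\ge(n/d)^d$ gives only $d\log(n/d)$, which deteriorates as $d$ approaches $n/2$. In the regimes emphasized in the paper (e.g.\ $d=O(\log n/\log\log n)$ or $d=O(n^{1-\Omega(1)})$) the bound is tight; pushing it cleanly through the whole range $d<n/2$ may require a sharper tail estimate on $\mathrm{wt}(\vec r)$ (for instance a Chernoff bound replacing Markov) or a slight tweak to $\mathcal{D}$, such as biasing $\sigma$ toward heavier patterns so that $\log N$ picks up the full $\log n$ factor.
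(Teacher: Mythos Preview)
Your approach via Yao's principle and a Hamming-weight entropy count is sound and genuinely different from the paper's, but as you already flag, it only delivers $\Omega(2^d\log(n/d))$ rather than $\Omega(2^d\log n)$, so the theorem as stated (for all $d<n/2$) is not fully proved. The loss of a factor $d$ is intrinsic to the counting step: a response vector of weight $w\approx q/2^d$ lies in a set of size $\binom{q}{w}\le 2^{O(wd)}$, so you inevitably need $qd/2^d\gtrsim\log N$, and neither a sharper tail bound on $\mathrm{wt}(\vec r)$ nor a reweighting of $\sigma$ will recover that factor. (Chernoff would not even apply directly, since the coordinates $r_j$ all depend on the same random $(S,\sigma)$ and are far from independent.)

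The paper avoids this loss by choosing a more structured hard distribution: instead of all $d$-variable terms, it takes $F=\{x_1^{\xi_1}\cdots x_{d-1}^{\xi_{d-1}}x_i:\xi\in\{0,1\}^{d-1},\ d\le i\le n\}$, with the first $d-1$ coordinates fixed. Any query $a$ with $(a_1,\ldots,a_{d-1})\ne\xi$ evaluates to $0$ regardless of $i$ and therefore carries no information about $i$. Hence the query set partitions as $A=\bigcup_{\xi}A(\xi)$ with $A(\xi)=\{a\in A:(a_1,\ldots,a_{d-1})=\xi\}$, and for each fixed $\xi$ the algorithm must identify $i$ among $n-d+1$ options using only the $|A(\xi)|$ responses from $A(\xi)$; this forces $|A(\xi)|\ge\log(n-d+1)-O(1)$ for a constant fraction of the $2^{d-1}$ values of $\xi$, and summing gives $q=\Omega(2^d\log(n-d+1))=\Omega(2^d\log n)$ throughout $d<n/2$. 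In short, the paper exploits the product structure (choice of $\xi$ times choice of $i$) to get a clean per-bucket information bound, whereas your global entropy count flattens that structure and pays a factor $d$. Your argument is a nice alternative in the regime $d=n^{o(1)}$; to match the theorem's full range, switch to the paper's distribution and partition argument.
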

\begin{proof} Let ${\cal A}(s,Q_f)$ be a Monte Carlo randomized non-adaptive algorithm that learns $d$-Junta with success probability at least $1/2$, where $s\in \{0,1\}^*$ is the random seed and $Q_f$ is the membership oracle to the target $f$. Since ${\cal A}(s,Q_f)$ is Monte Carlo it stops after some time $T$ and therefore
we may assume that $s\in \{0,1\}^T$.
Consider the random variable $X(s,f)\in\{0,1\}$ that is equal to $1$ if and only if the algorithm ${\cal A}(s,Q_f)$ returns~$f$. Then, $\E_s[X(s,f)]\ge 1/2$ for all $f$.

Consider the set of functions
$$F=\{f_{\xi,i}:=x_1^{\xi_1}\cdots x_{d-1}^{\xi_{d-1}}x_i\ |\ \xi_1,\ldots,\xi_{d-1}\in\{0,1\}, n\ge i\ge d\}$$
and the uniform distribution $U_F$ over $F$. Then
\begin{eqnarray}\label{c01}
\max_s\E_{U_F}[X(s,f)]\ge \E_{s}\left[\E_{U_F}[X(s,f)]\right]=\E_{U_F}\left[\E_s[X(s,f)]\right]\ge \frac{1}{2}\enspace.
\end{eqnarray}
Consider any seed $s'$ that maximizes $\E_{U_F}[X(s,f)]$. Then $\E_{U_F}[X(s',f)]\ge 1/2$. Let $A_{s'}=\{a^{(1)},\ldots,a^{(m_{s'})}\}$ be the queries asked by the algorithm when it uses the seed $s'$. Note that since the algorithm is non-adaptive, $m_{s'}$ is independent of $f$. Since the query complexity of a Monte Carlo algorithm is the worst case complexity over all $s$ and $f$, we have that $m_{s'}$ is a lower bound for the query complexity. So it is enough to show that $m_{s'}=\Omega(2^d\log n)$.

Define for every vector $\xi=(\xi_1,\ldots,\xi_{d-1})\in\{0,1\}^{d-1}$ the subset $$A_{s'}(\xi)=\{a\in A_{s'}\ |\ (a_1,\ldots,a_{d-1})=\xi\}\subseteq A_{s'}\enspace.$$ Notice that $\{A_{s'}(\xi)\}_\xi$ are disjoint sets. Suppose that at least $3/4$ fraction of $\xi$ satisfy $|A_{s'}(\xi)|\le \log (n-d+1)-3$. Then, for a random uniform $f_{\xi',i'}\in F$ (and therefore, random uniform $\xi'$), with probability at least $3/4$ we have $|A_{s'}(\xi')|\le \log (n-d+1)-3$. For any other assignment $a\in A_{s'}\backslash A_{s'}(\xi')$ we have $f_{\xi',i'}(a)=0$ so no information about $i'$ can be obtained from these assignments. If $|A_{s'}(\xi')|\le \log (n-d+1)-3$ then there are at most $(n-d+1)/8$ distinct values that any $f\in \{f_{\xi',j}\}_j$ can take on $A_{s'}(\xi')$ and therefore the probability to find $i'$ is at most $1/4$. Therefore, if at least $3/4$ fraction of $\xi$ satisfy $|A_{s'}(\xi)|\le \log (n-d+1)-3$, then the probability of success, $\E_{U_F}[X(s',f)]$, is at most $1-(3/4)^2=7/16<1/2$.
This gives a contradiction. Therefore for at least $1/4$ fraction of $\xi$ we have $|A_{s'}(\xi)|> \log (n-d+1)-3$. Then
$$m_{s'}=|A_{s'}|=\sum_{\xi\in\{0,1\}^{d-1}}|A_{s'}(\xi)|\ge \left(\frac{1}{4}2^{d-1}\right)(\log (n-d+1)-3)=\Omega(2^d\log n)\enspace.$$

\end{proof}

\subsection{Upper Bound and Polynomial Time Algorithms}
If randomization is allowed, for some cases the performance of our deterministic non-adaptive algorithms is satisfying when comparing with algorithms that take advantage of the randomization. This applies for $n^{O(d)}$ time algorithms, where the algorithm from Theorem~\ref{Theo2} gives good results. This algorithm provides also the upper bound of $O(d2^d\log{n})$ queries for randomized non-adaptive learning. In addition, for $poly(d^d,n)$ time algorithms, we can apply the algorithm from Theorem~\ref{Theo3} that asks $O(d^32^d\log{n})$ queries.

But, this is not the case for $poly(2^d,n)$ time algorithms. In this case we can improve over the deterministic result for certain values of the failure probability $\delta$. We next present a new Monte Carlo non-adaptive algorithm that runs in $poly(2^d,n,\log{(1/\delta)})$ time.

We first prove
\begin{lemma}\label{univ} Let be $1\le i_1\le i_2\le \cdots \le i_d\le n$ and $B\subseteq \{0,1\}^d$. If we randomly uniformly choose $2^d(\ln |B|+\ln(1/\delta))$ assignments $A\subseteq \{0,1\}^n$, then with probability at least $1-\delta$ we have: for every $b\in B$ there is an $a\in A$ such that $(a_{i_1},\ldots,a_{i_d})=b$.
\end{lemma}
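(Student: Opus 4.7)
The plan is a direct union bound with independence of the sample. First I would fix an arbitrary $b \in B$ and a single uniformly random $a \in \{0,1\}^n$, and observe that since the $d$ coordinates $i_1,\ldots,i_d$ are distinct and $a$ is uniform, the restriction $(a_{i_1},\ldots,a_{i_d})$ is uniform on $\{0,1\}^d$. Hence $\Pr[(a_{i_1},\ldots,a_{i_d}) = b] = 1/2^d$, and the probability that none of the $m$ independent samples in $A$ hits $b$ in these coordinates is exactly $(1 - 1/2^d)^m$.

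Next I would apply the union bound over $b \in B$ to conclude that
\[
\Pr[\exists b \in B \text{ not covered by } A] \;\le\; |B|\left(1 - \frac{1}{2^d}\right)^m \;\le\; |B|\, e^{-m/2^d}.
\]
Substituting $m = 2^d(\ln|B| + \ln(1/\delta))$ yields $|B| \cdot e^{-\ln|B| - \ln(1/\delta)} = \delta$, which gives the claimed bound.

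There is no real obstacle here; the only subtlety is making sure the indices $i_1,\ldots,i_d$ are treated as distinct (the statement writes $\le$ but the proof uses strict inequality, as otherwise the projection is not uniform on $\{0,1\}^d$ and some $b$ could be uncovered with probability $1$). I would note this implicitly by invoking the fact that the projection to any $d$ distinct coordinates of a uniformly random assignment in $\{0,1\}^n$ is uniform on $\{0,1\}^d$, and then the inequality $1-x \le e^{-x}$ closes the calculation.
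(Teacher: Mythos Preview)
Your proof is correct and follows exactly the paper's approach: a union bound over $b\in B$ combined with independence of the $m$ samples, then the inequality $1-x\le e^{-x}$ and the choice $m=2^d(\ln|B|+\ln(1/\delta))$ to reduce the failure probability to $\delta$. Your remark about the indices needing to be distinct is a valid observation; the paper simply writes the one-line bound $|B|(1-2^{-d})^{|A|}\le |B|e^{-|A|/2^d}=\delta$ without further comment.
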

\begin{proof}
The probability of failure is at most
$|B|(1-2^{-d})^{|A|}\le |B|e^{-|A|/2^d}=\delta.$
\end{proof}

We say that the variable $x_i$ {\it is sensitive in $f$ with respect to} an assignment $a$ if $f(a|_{x_i\gets 0})\not= f(a|_{x_i\gets 1})$. Obviously, if $x_i$ is relevant in $f$ then there is an assignment $a$ where $x_i$ is sensitive in $f$ with respect to $a$. If $x_i$ is irrelevant then $x_i$ is not sensitive in $f$ with respect to any assignment.

We now prove
\begin{lemma}\label{Sep} Let $f$ be a $d$-Junta function. Let $a$ be an assignment that $x_j$ is sensitive in $f$ with respect to, and let $x_i$ be an irrelevant variable in $f$. Let $b$ be a random assignment where each entry $b_\ell\in\{0,1\}$ is independently chosen to be $1$ with probability $1/(3d)$. Then
$$\Pr_b[f(a+b)=f(a)\mbox{\ and \ } b_i=1]\ge \frac{0.2}{d}$$
and
$$\Pr_b[f(a+b)=f(a)\mbox{\ and \ } b_j=1]\le \frac{0.1}{d}\enspace.$$
\end{lemma}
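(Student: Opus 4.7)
The plan is to analyze both events by conditioning on the support of $b$ intersected with the set of relevant variables $R$ of $f$, where $|R|\le d$. By hypothesis $j\in R$ and $i\notin R$.

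For the first inequality, since $x_i$ is irrelevant, $f(a+b)$ depends only on the coordinates $\{b_\ell:\ell\in R\}$, and these are mutually independent of $b_i$. Hence the event $\{f(a+b)=f(a)\}$ is independent of $\{b_i=1\}$, so the joint probability factors as $\Pr[b_i=1]\cdot\Pr[f(a+b)=f(a)]$. I would lower bound the second factor by the probability that $b_\ell=0$ for every $\ell\in R$, namely $(1-1/(3d))^{|R|}\ge(1-1/(3d))^d\ge 2/3$ by Bernoulli's inequality. Multiplying by $\Pr[b_i=1]=1/(3d)$ yields $\tfrac{2}{9d}>\tfrac{0.2}{d}$.

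For the second inequality, I would introduce the event $A_j:=\{b_j=1\text{ and }b_\ell=0\text{ for every }\ell\in R\setminus\{j\}\}$. On $A_j$ the vector $a+b$ agrees with $a$ on every relevant coordinate except at $j$, so sensitivity of $x_j$ at $a$ forces $f(a+b)=f(a|_{x_j\gets\bar a_j})\neq f(a)$. Consequently $\{f(a+b)=f(a)\}\cap\{b_j=1\}\subseteq\{b_j=1\}\setminus A_j$, whose probability equals $\tfrac{1}{3d}\bigl(1-(1-1/(3d))^{|R|-1}\bigr)$ by independence of the coordinates of $b$.

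The only delicate point, and the main technical obstacle, is pinning down the constant $0.1$: the naive Bernoulli bound $1-(1-1/(3d))^{d-1}\le(d-1)/(3d)<1/3$ only delivers $\tfrac{1}{9d}$, which slightly exceeds $0.1/d$ for large $d$. I would therefore invoke the sharper elementary inequality $1-x\ge e^{-x/(1-x)}$ valid on $[0,1)$, which with $x=1/(3d)$ gives $(1-1/(3d))^{d-1}\ge e^{-(d-1)/(3d-1)}\ge e^{-1/3}$, using $(d-1)/(3d-1)\le 1/3$. Therefore $1-(1-1/(3d))^{|R|-1}\le 1-e^{-1/3}<0.3$, and multiplying by $1/(3d)$ gives the required $0.1/d$. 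This is the step that fixes the specific probability $1/(3d)$ in the statement rather than a generic $c/d$.
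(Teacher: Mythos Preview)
Your proof is correct and follows essentially the same approach as the paper's: both bound the first probability via the sub-event that all relevant coordinates of $b$ vanish, and both bound the second by observing that if $b_j=1$ and $f(a+b)=f(a)$ then some other relevant coordinate of $b$ must be $1$. Your treatment is in fact more careful than the paper's, which simply asserts $(1-1/(3d))^d\cdot\tfrac{1}{3d}\ge 0.2/d$ and $\tfrac{1}{3d}\bigl(1-(1-1/(3d))^{d-1}\bigr)\le 0.1/d$ without the numerical justification you supply via Bernoulli and the inequality $1-x\ge e^{-x/(1-x)}$.
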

\begin{proof} If $b_\ell=0$ for all the relevant variables $x_\ell$ of $f$ then $f(a+b)=f(a)$.
Therefore $\Pr_b[f(a+b)=f(a)\mbox{\ and \ } b_i=1]$ is greater or equal to the probability that $b_\ell=0$ for all the relevant variables $x_\ell$ of $f$ and $b_i=1$. This probability is at least
$$\left (1-\frac{1}{3d}\right)^d\frac{1}{3d}\ge \frac{0.2}{d}\enspace.$$

If $x_j$ is sensitive in $f$ with respect to $a$ then the probability that $f(a+b)=f(a)$ when $b_j=1$ is less or equal to the probability that $b_\ell=1$ for some other relevant variable $x_\ell$ of $f$. Therefore
\begin{eqnarray*}
\Pr_b[f(a+b)=f(a)\mbox{\ and \ } b_j=1]&=&\Pr[b_j=1]\Pr[f(a+b)=f(a)|b_j=1]\\
&\le & \frac{1}{3d}\cdot \left(1-\left(1-\frac{1}{3d}\right)^{d-1}\right)\le \frac{0.1}{d}\enspace.
\end{eqnarray*}
\end{proof}

From Chernoff bound it follows that
\begin{lemma}\label{Cher} Let $f$ be a $d$-Junta function. Let $a$ be any assignment. There is an algorithm that asks
$O(d(\log n+\log 1/\delta))$ membership queries and with probability at least $1-\delta$ finds all the sensitive variables in $f$ with respect to $a$ (and maybe other relevant variables of $f$ as well).
\end{lemma}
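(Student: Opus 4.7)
The plan is to estimate, for each $i \in [n]$, the probability
$$p_i := \Pr_b[f(a+b)=f(a) \text{ and } b_i=1]$$
where $b$ is drawn from the biased product distribution of Lemma~\ref{Sep}, and then exploit the gap (at least $0.2/d$ for irrelevant $x_i$ versus at most $0.1/d$ for sensitive $x_i$) to separate the two classes.

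Concretely, I would sample $N = c\, d (\log n + \log(1/\delta))$ independent vectors $b^{(1)},\ldots,b^{(N)}$, for a sufficiently large absolute constant $c$, with every entry set to $1$ independently with probability $1/(3d)$. The algorithm queries $f(a)$ and $f(a+b^{(k)})$ for all $k$, spending $N+1 = O(d(\log n + \log(1/\delta)))$ membership queries in total. For each $i$ it forms the empirical estimate
$$\hat p_i = \frac{1}{N}\bigl|\{k : f(a+b^{(k)})=f(a) \text{ and } b_i^{(k)}=1\}\bigr|$$
and outputs the set $\{x_i : \hat p_i \le 0.15/d\}$, where the threshold $0.15/d$ is the midpoint of the two regimes.

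Correctness comes from the multiplicative Chernoff bound followed by a union bound. Fix $i$. The $N$ summands above are i.i.d.\ Bernoullis with mean $p_i$. If $x_i$ is irrelevant, Lemma~\ref{Sep} gives $p_i \ge 0.2/d$, and $\hat p_i \le 0.15/d$ implies $\hat p_i \le (1 - \tfrac14) p_i$, so the lower-tail bound yields a failure probability at most $\exp(-p_i N / 32) \le \exp(-N/(160 d))$. If $x_i$ is sensitive then $p_i \le 0.1/d$; by stochastic domination we may assume $p_i = 0.1/d$, whence $\hat p_i > 0.15/d$ means $\hat p_i > (1 + \tfrac12) p_i$, and the upper-tail bound yields a failure probability at most $\exp(-p_i N / 12) \le \exp(-N/(120 d))$. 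Choosing $c$ large enough and taking a union bound over the $n$ variables makes the total failure probability at most $\delta$. On the success event every sensitive variable is output and no irrelevant variable is output, matching the statement (variables that are relevant but not sensitive w.r.t.\ $a$ may or may not appear, which is explicitly permitted).

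The subtle point is to use the \emph{multiplicative} Chernoff bound rather than the additive one. Separating the means $0.1/d$ and $0.2/d$ by their absolute difference would require $\Omega(d^2 \log(n/\delta))$ samples. The multiplicative form exploits that both target probabilities are $\Theta(1/d)$, so a constant-factor relative accuracy suffices and the sample size drops to the claimed $O(d(\log n + \log(1/\delta)))$.
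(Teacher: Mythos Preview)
Your proposal is correct and is precisely the argument the paper has in mind: the paper merely writes ``From Chernoff bound it follows that'' immediately before stating the lemma, so your use of Lemma~\ref{Sep} together with the multiplicative Chernoff bound and a union bound over $[n]$ is exactly the intended proof, with the details spelled out. Your remark that the multiplicative rather than additive tail bound is essential for the $O(d(\log n+\log(1/\delta)))$ sample size is the right observation.
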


Now we give the algorithm
\begin{theorem}
	\label{boundedDelta}
	 $d$-Junta is non-adaptively learnable in $poly(2^d,n,\log{1/\delta})$ time and $O((d2^d\log n+d^22^d)\log(1/\delta))$ queries, where $\delta$ is the failure probability and it satisfies $\delta \leq 1/d$.
\end{theorem}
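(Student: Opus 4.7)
\textbf{The plan} is to combine a random universal set with a biased probing scheme and issue both families of queries in one non-adaptive batch. First I would draw a uniform random set $U\subseteq\{0,1\}^n$ of size $m_1=\Theta(2^d(d\log n+\log(1/\delta)))$; by the folklore lemma proved right after (\ref{LBUS}), $U$ is an $(n,d)$-universal set with probability $\ge 1-\delta/3$. Second I would draw $K=\lceil 2^{d-1}\ln(3d/\delta)\rceil$ i.i.d.\ uniform anchors $a^{(1)},\ldots,a^{(K)}\in\{0,1\}^n$ and, for each $k$, $m_2=\Theta\bigl(d(d+\log n+\log(1/\delta))\bigr)$ i.i.d.\ perturbations $b^{(k,\ell)}\in\{0,1\}^n$ whose bits are independently $1$ with probability $1/(3d)$. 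The full query set consists of $U$, every $a^{(k)}$, and every $a^{(k)}+b^{(k,\ell)}$.

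The recovery step, once the answers come back, is to form for each $(k,i)\in[K]\times[n]$ the count $N_{k,i}=\bigl|\{\ell:f(a^{(k)}+b^{(k,\ell)})=f(a^{(k)})\text{ and }b^{(k,\ell)}_i=1\}\bigr|$, set $C_k=\{i:N_{k,i}<0.15\,m_2/d\}$, $\widehat R=\bigcup_k C_k$, and output the unique $d$-junta that depends only on variables in $\widehat R$ and agrees with $f$ on $U$. I would then union-bound three failure events, each engineered to have probability $\le\delta/3$. (i) $U$ fails to be $(n,d)$-universal, handled by the folklore lemma. (ii) Some relevant $x_j$ is not sensitive w.r.t.\ any anchor: for uniform random $a$, $\Pr[x_j\text{ sensitive w.r.t.\ }a]\ge 2^{-(d-1)}$ because there is a \emph{witness} setting of the other $\le d-1$ relevant variables, so $(1-2^{-(d-1)})^K\le\delta/(3d)$ and a union bound over the $\le d$ relevant variables controls this event. (iii) Some pair $(k,i)$ sits on the wrong side of the threshold: Lemma~\ref{Sep} gives $\E[N_{k,i}]\ge 0.2\,m_2/d$ for irrelevant $x_i$ and $\E[N_{k,i}]\le 0.1\,m_2/d$ for $x_i$ sensitive w.r.t.\ $a^{(k)}$, so a Chernoff bound for a $0.05\,m_2/d$ deviation combined with a union bound over the $Kn$ pairs forces $m_2=\Theta(d\ln(Kn/\delta))$. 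On the intersection of the three good events, $\widehat R$ equals the relevant-variable set $R$ (of size $\le d$), and $(n,d)$-universality of $U$ guarantees that every pattern in $\{0,1\}^{|R|}$ appears on the $R$-coordinates of $U$, so the restriction of $f$ to $R$ (and hence $f$ itself) is uniquely determined.

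The total query count is $m_1+K(m_2+1)$, and the assumption $\delta\le 1/d$ lets me conclude $\log(d/\delta)=O(\log(1/\delta))$ and $\log K=O(d+\log(1/\delta))$, which simplifies the bound to $O\bigl(d2^d\log n\log(1/\delta)+d^22^d\log(1/\delta)\bigr)$ as claimed. All bookkeeping---forming counts, thresholding, and consistency-checking the output against $U$---runs in $\mathrm{poly}(2^d,n,\log(1/\delta))$ time. The hard part will be the simultaneous calibration of $K$ and $m_2$: the union bound over $Kn$ Chernoff events forces $m_2$ to carry a $\log K$ term, so only the hypothesis $\delta\le 1/d$, which collapses $\log(d/\delta)$, $\log K$, and $\log(1/\delta)$ to a single scale, keeps the product $K\,m_2$ from exceeding the target.
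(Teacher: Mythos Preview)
Your high-level strategy---random anchors, biased perturbations via Lemma~\ref{Sep}, and a random universal set for the final reconstruction---is exactly the paper's approach, and your correctness analysis (events (i)--(iii)) is sound. The gap is in the query-count arithmetic. With $K=\Theta(2^d\log(1/\delta))$ and $m_2=\Theta(d(d+\log n+\log(1/\delta)))$, the product $K\,m_2$ contains a term $\Theta\bigl(d\,2^d\log^2(1/\delta)\bigr)$ that you silently drop. This term is \emph{not} absorbed by $d2^d\log n\log(1/\delta)+d^22^d\log(1/\delta)$ unless $\log(1/\delta)=O(d+\log n)$, which the theorem does not assume. So your final simplification is wrong and the stated bound is not achieved.

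The quadratic dependence is unavoidable in your direct parameterization: you pay a $\log(1/\delta)$ factor once in $K$ (to make every relevant variable sensitive at some anchor) and a second time in $m_2$ (the Chernoff/union bound over all $Kn$ pairs must cover all anchors, so $m_2$ inherits a $\log(1/\delta)$ through $\log(K/\delta)$). The paper sidesteps this by an amplification trick: run your algorithm once with confidence parameter $\delta'=1/d$, which gives the ``initial'' complexity $O((\log d)\,d2^d\log n+(\log d)\,d^22^d)$, and then \emph{repeat the whole procedure} $O(\log(1/\delta)/\log d)$ times non-adaptively. One then argues that among the $O(\log(1/\delta)/\log d)$ outputs one can deterministically identify the correct one (discard outputs whose claimed relevant variables are not actually relevant in their own truth table, then take the survivor with the most relevant variables). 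This puts only a single multiplicative $\log(1/\delta)/\log d$ on top of the base cost and yields exactly $O((d2^d\log n+d^22^d)\log(1/\delta))$. You need this extra step; without it the proof does not reach the claimed bound.
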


\begin{proof}
 Consider the following algorithm. We first choose $t=O(2^d(\log d+\log(1/\delta)))$ random uniform assignments $A$. To find the relevant variables we need for each one an assignment that the variable is sensitive in $f$ with respect to it. Therefore, we need at most $d$ such assignments.
By Lemma~\ref{univ}, with probability at least $1-(\delta/3)$, for every relevant variable in $f$ there is an assignment $a$ in $A$ such that this variable is sensitive in $f$ with respect to it.
Now, by Lemma~\ref{Cher}, for each $a\in A$ there is an algorithm that asks $O(d(\log n+\log (t/\delta)))$ membership queries and with probability at least $1-(\delta/(3t))$ finds all the variables that are sensitive to it. Therefore, there is an algorithm that asks $O(dt(\log n+\log (t/\delta)))$ membership queries and with probability at least $1-(\delta/3)$ finds every variable that is sensitive in $f$ with respect to some assignment in $A$. This gives all the relevant variables of $f$. Now again by Lemma~\ref{univ}, with another $O(2^d(d+\log(1/\delta)))$ assignments we can find, with probability at least $1-(\delta/3)$, the value of the function in all the possible assignments for the relevant variables. At the end, we can output the set of relevant variables and a representation of the target function as a truth table (with the relevant variables as entries).

This algorithm runs in time $poly(2^d,n,\log{(1/\delta)})$ and asks

\begin{eqnarray}\label{initial}
\nonumber &O&(2^d(\log d+\log(1/\delta))\cdot d(\log n+\log(1/\delta)+d)+2^d(d+\log(1/\delta)))\\
=&O&(d2^d(\log d+\log(1/\delta))\log n+d2^d(d+\log(1/\delta))(\log d+\log(1/\delta)))
\end{eqnarray}

 membership queries. For $\delta=1/d$ we have the complexity
$$O((\log d)d2^d\log n+(\log d)d^22^d)\enspace.$$

For $m$ repetitions of the algorithm with at least one success, we can find the correct target function by the following. First, for each output function, verify that each claimed relevant variable
is indeed relevant with respect to the function's truth table. Discard functions that don't satisfy this. Second, take the function with the maximum number of
relevant variables out of the remaining functions. This is the correct target function. This can be done in $poly(n,2^d,m)$ time.

Therefore, for $\delta \leq 1/d$, we can repeat the above algorithm $O(\log(1/\delta)/\log d)$ times (non-adaptively) to get a success probability of at least $1-\delta$ and a query complexity of
$$O((d2^d\log n+d^22^d)\log(1/\delta))\enspace.$$

\end{proof}

Notice that if $\delta=1/poly(n)$ then the query complexity becomes quadratic in $\log n$.
In the next subsection we solve this problem by giving a reduction that changes the query complexity to
$$O((\log{d})d^22^d(\log n+\log(1/\delta)))\enspace.$$


\subsection{A Reduction for Randomized Non-adaptive Algorithms}

In this subsection we give a reduction for randomized non-adaptive algorithms. Using this reduction we prove
\begin{theorem}
\label{randRed}
There is a Monte Carlo randomized non-adaptive algorithm for learning $d$-Junta in $poly(2^d,n,\log{(1/\delta)})$ time that asks
$O((\log{d})d^22^d(\log n+\log(1/\delta)))$ membership queries.
\end{theorem}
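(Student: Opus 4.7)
My plan is to prove Theorem~\ref{randRed} by first establishing a reduction lemma for randomized non-adaptive algorithms (analogous to the deterministic reduction in Lemma~\ref{DNtoDN}) and then applying it to the algorithm from Theorem~\ref{boundedDelta}.

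For the reduction, I would use an $(n,q,d)$-perfect hash family $H$ with $q=\Theta(d^3)$ and $|H|=O(d^2\log n/\log d)$, taken from Corollary~\ref{PHF2}. For each $h\in H$ define the projected function $f_h\colon\{0,1\}^q\to\{0,1\}$ by $f_h(y)=f(x)$ with $x_i=y_{h(i)}$. The PHF property guarantees at least one ``good'' $h^*\in H$ under which all $d$ relevant variables of $f$ land in distinct buckets, so $f_{h^*}$ is a $d$-Junta on $q$ variables with all $d$ relevant variables present; any ``bad'' $h$ produces $f_h$ with strictly fewer than $d$ relevant variables due to bucket collisions.

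The algorithm then asks non-adaptively, for every $h\in H$, the set of queries prescribed by the base algorithm of Theorem~\ref{boundedDelta} run on $f_h$ with an appropriate failure-probability parameter. After receiving the answers, for each $h$ we extract a candidate function together with its alleged set of relevant variables in $[q]$; a verification step like the one at the end of the proof of Theorem~\ref{boundedDelta} (check that every claimed relevant variable is actually relevant in the candidate's truth table, then keep the candidate with the largest number of relevant variables) isolates the output that comes from $h^*$ with high probability, since the outputs associated with bad hash functions cannot exceed $d-1$ relevant variables. A final step---using the PHF structure together with a small amount of extra non-adaptive queries, or by cross-referencing outputs across several hash functions of $H$---maps each of the $d$ relevant buckets back to its unique relevant variable in $[n]$ and thereby reconstructs $f$.

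Plugging in $q=\Theta(d^3)$ so that $\log q=O(\log d)$, Theorem~\ref{boundedDelta} gives $Q(q,\delta')=O(d^22^d\log(1/\delta'))$ per hash function, and $|H|\cdot Q(q,\delta')$ together with the correct choice of $\delta'$ and a careful accounting---so that $\log(1/\delta)$ enters additively with $\log n$ rather than as a multiplicative factor---should yield the target $O((\log d)d^22^d(\log n+\log(1/\delta)))$. The hard part will be exactly this accounting: I expect two main obstacles, namely (a) handling the randomness so that the $\log(1/\delta)$ dependence does not blow up through a naive union bound over the $|H|$ parallel runs, which is why the verification/max-relevant-variables trick is crucial, and (b) designing the bucket-to-variable identification step without inflating the query complexity beyond the $(\log d)d^22^d\log n$ budget for the $\delta$-independent portion.
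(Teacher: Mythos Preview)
Your approach has a genuine quantitative gap that prevents it from reaching the stated bound. With a deterministic $(n,q,d)$-PHF of size $|H|=\Theta(d^{2}\log n/\log d)$, the per-hash cost from Theorem~\ref{boundedDelta} (or from the complexity in~(\ref{initial})) with $q=\Theta(d^{3})$ and \emph{any} $\delta'$ is at least $\Omega(d^{2}2^{d})$, so already $|H|\cdot Q(q,\delta')=\Omega(d^{4}2^{d}\log n/\log d)$, which exceeds the $\log n$ part of the target $O((\log d)d^{2}2^{d}\log n)$ by a factor of $d^{2}/(\log d)^{2}$. No choice of $\delta'$ can repair this: the $\log n$ factor enters through $|H|$, and the PHF size cannot be made smaller. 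Moreover, a deterministic PHF guarantees only \emph{one} good hash $h^{*}$; to make the single run on $f_{h^{*}}$ succeed with probability $1-\delta$ you are forced to set $\delta'\approx\delta$, which re-introduces $\log(1/\delta)$ multiplicatively rather than additively. The ``max number of relevant variables'' trick you invoke does not help here, because the bad hashes are not what cause failure---the randomness of the base learner on the unique good hash is.

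The paper's proof takes a different route that you should adopt: it uses only $O(\log(n/\delta))$ \emph{random} hash functions $h:[n]\to[q]$ with $q=8d^{2}$, and runs the base algorithm from~(\ref{initial}) on each $f_{h}$ with a \emph{fixed constant} failure probability $1/8$. Since a random hash separates the relevant variables with probability at least $15/16$, a constant fraction of the $f_{h}$ are genuine $d$-Juntas on which the base learner succeeds; the relevant variables of $f$ are then recovered by a majority vote over the pullback sets $W_{h}=\{x_{i}:x_{h(i)}\in V_{h}\}$, and the final hypothesis is the plurality of the reconstructed $g_{h}$. This is precisely why $\log(1/\delta)$ enters \emph{additively} with $\log n$: both arise solely from the number of random hashes, while the per-hash cost $Q(d,q,1/8)=O((\log d)d^{2}2^{d})$ has no dependence on $n$ or $\delta$. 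The bucket-to-variable identification that you flag as a second obstacle is handled by the same majority mechanism---it requires many independent good hashes, which random hashing provides and a deterministic PHF does not.
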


This result improves over the result from the previous subsection for values of $\delta$ small enough.

We start with some definitions.
Let $C$ be a class of functions and let $H$ be a class of representations of functions. We say that $C$ is a {\it subclass} of $H$ if for every function $f$ in $C$ there is at least one representation $h$ in $H$. We say that $C$ is {\it non-adaptively learnable from} $H$ if there is a non-adaptive learning algorithm that for every function $f \in C$ outputs a function $h\in H$ that is equivalent to the target function $f$. We say that $C$ is {\it closed under distinct variable projection} if for any function $f\in C$ and any permutation $\phi:[n]\to [n]$ we have $f(x_{\phi(1)},\ldots,x_{\phi(n)})\in C$.

We now prove

\begin{theorem} Let $C$ be a class of boolean functions that is closed
under distinct variable projection. Let $d$ be an upper bound on the number of relevant variables in any $f\in C$. Let $H$ be a class of representations of boolean functions.
Let $h_1,h_2$ be functions in $H$ with at most $d$ relevant variables, where these relevant variables are known.
Suppose there is a deterministic algorithm ${\cal B}(d,n)$ that for such input $h_1$ and $h_2$ decides whether the two functions are
equivalent in time $E(d,n)$.

Let ${\cal A}(d,n,\delta)$ be a Monte Carlo non-adaptive algorithm that with failure probability at most $\delta$, learns $C$ from $H$ and finds the set of relevant variables of the input in time $T(d,n,\delta)$ with $Q(d,n,\delta)$ membership queries. Then, $C$ is Monte Carlo non-adaptively
learnable from $H$ in time $$O((T(d,q,1/8)n+E(d,q)(\log n+\log(1/\delta)))(\log n+\log(1/\delta)))$$
with $$O(Q(d,q,1/8)\left({\log n+\log(1/\delta)}\right))$$ membership queries, where $\delta$ is the failure probability and $q$ is any integer such that $q \geq 8d^2$.
\end{theorem}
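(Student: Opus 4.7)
The plan is to establish a randomized analogue of the deterministic perfect--hash--family reduction of Lemma~\ref{DNtoDN}: instead of a fixed PHF, I would sample $m=\Theta(\log n+\log(1/\delta))$ independent uniform random hash functions $h_1,\ldots,h_m:[n]\to[q]$ with $q\ge 8d^2$, invoke the given Monte Carlo algorithm $\mathcal{A}$ with failure parameter $1/8$ on each projected function $f^{(h_i)}(y):=f(y_{h_i(1)},\ldots,y_{h_i(n)})$, and then combine the $m$ outputs by voting. Every projected query is a membership query on $f$, so the overall algorithm remains non-adaptive and uses $O(Q(d,q,1/8)\,m)$ queries, matching the claimed query bound.

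One run is analyzed as follows. Let $R$ be the (unknown) set of at most $d$ relevant variables of $f$. A union bound over pairs gives $\Pr[h_i\text{ injective on }R]\ge 1-\binom{d}{2}/q\ge 15/16$, and conditional on this, $f^{(h_i)}$ is obtained from $f$ by a distinct variable projection of $R$ and hence lies in $C$; $\mathcal{A}$ then returns, with probability $\ge 7/8$, a pair $(g^{(i)},P_i)$ with $g^{(i)}\equiv f^{(h_i)}$ in $H$ and $P_i=h_i(R)$ its relevant positions. Call run $i$ \emph{good} if both events hold; then $\Pr[\text{good}]\ge 3/4$. To recover $R$ I would vote: for each $j\in[n]$ count the runs $i$ with $h_i(j)\in P_i$. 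When $j\in R$, every good run contributes, so the expected count is $\ge (3/4)m$; when $j\notin R$, a contribution requires either a bad run or the collision $h_i(j)\in h_i(R)$, yielding expected count $\le (1/4+d/q)m\le (3/8)m$. Thresholding at $(9/16)m$ and applying Chernoff plus a union bound over the $n$ variables, the recovered set equals $R$ with probability $\ge 1-\delta/3$ for a suitable constant in $m$. The choice $q\ge 8d^2$ is exactly what makes these two probabilities separated by a constant gap.

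Once $R=\{j_1<\cdots<j_{d'}\}$ is known, I would convert each $g^{(i)}$ to a canonical form $\tilde g^{(i)}$ on $q$ variables by using the bijection $h_i\vert_R$ to relabel its relevant positions to the fixed positions $1,\ldots,d'$ (if $h_i\vert_R$ is not injective or $|P_i|\neq d'$, declare the run bad and set $\tilde g^{(i)}$ to an arbitrary default). For any two good runs $i,i'$ the canonical forms $\tilde g^{(i)}$ and $\tilde g^{(i')}$ represent the same function (namely $f\vert_R$ read out in the sorted order of $R$) and are therefore equivalent in $H$. Running $\mathcal{B}(d,q)$ on every pair (at cost $E(d,q)$ each) and outputting any $\tilde g^{(i)}$ that $\mathcal{B}$-matches at least $m/2$ other indices succeeds: by Chernoff the number of good runs exceeds $m/2$, so such a $\tilde g^{(i)}$ exists, and any winner is equivalent to at least one good $\tilde g^{(i')}$ and hence to $f$. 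Lifting back to $[n]$ variables by undoing the relabeling and padding with the $n-d'$ irrelevant variables outside $R$ gives the output in $H$, and this step fails with probability $\le\delta/3$, for a total $\le\delta$.

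The main obstacle is the combination step: because the $1/8$ fraction of bad runs can produce arbitrary $g^{(i)}$ and because the relevant--position sets $P_i$ differ across runs, the canonicalization via $h_i\vert_R$ only becomes meaningful after $R$ has been confidently identified in Step~2, and the two Chernoff analyses for Steps~2 and~3 must both go through with the same $m=\Theta(\log n+\log(1/\delta))$. Once this is arranged, the query bound $O(Q(d,q,1/8)\,m)$ is immediate, and the running time is $m$ invocations of $\mathcal{A}$ at $T(d,q,1/8)$ each, $O(n)$ per projected query to translate it into a query to $f$, and $O(m^2)$ equivalence checks at $E(d,q)$ each, summing to $O((T(d,q,1/8)n+E(d,q)\,m)\,m)$ as claimed.
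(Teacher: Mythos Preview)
Your proposal is correct and follows essentially the same approach as the paper: sample $\Theta(\log(n/\delta))$ random hash functions into $[q]$, run $\mathcal{A}$ with constant failure parameter on each projected function, recover the relevant set $R$ by majority voting on the preimages $\{x_j:h_i(j)\in P_i\}$ (the paper's sets $W_h$), then canonicalize each output via the bijection $h_i|_R$ and take the most popular equivalence class using $\mathcal{B}$. The one technical point you gloss over, which the paper makes explicit, is that when $h_i$ is not injective on $R$ the projection $f^{(h_i)}$ need not lie in $C$, so $\mathcal{A}$ may fail to terminate within $T(d,q,1/8)$; the paper forcibly stops each run after $T(d,q,1/8)$ time and treats non-terminating runs as bad (outputting the zero function with empty relevant set), which keeps the time bound honest while leaving your probability estimates intact.
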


\begin{figure}[h!]
  \begin{center}
  \fbox{\fbox{\begin{minipage}{28em}
  \begin{tabbing}
  xxxx\=xxxx\=xxxx\=xxxx\= \kill
 {\underline{\bf Reduction for Randomized Non-adaptive Algorithms}}\\
${\cal A}(n,d,\delta)$ is a non-adaptive learning algorithm for $C$ from $H$.\\
\>  ${\cal A}$ also outputs the set of relevant variables of its input.\\
\>  ${\cal A}$ runs in time $T(d,n,\delta)$.\\
1) $P\gets$ Choose $O(\log(n/\delta))$ random hash functions $h:[n]\to [q]$.\\
2) For all $h\in P$ in parallel\\
\>  Run ${\cal A}(q,d,1/8)$ to learn $f_h:=f(x_{h(1)},\ldots,x_{h(n)})$\\
\hspace{3em} and find its relevant variables.\\
\>  Stop after $T(d,q,1/8)$ time.\\
\>  For processes that do not stop output the function $0$,\\
\hspace{3em} and an empty set of relevant variables.\\
\>  Let $f_h'\in H$ be the output of ${\cal A}$ on $f_h$.\\
\>  Let $V_h$ be the set of relevant variables that ${\cal A}$ outputs on $f_h$.\\
3) $W_h=\{x_i\ |\ x_{h(i)}\in V_h\}$.\\
\hspace{0.9em}  $W\gets$ Variables appearing in more than 1/2 of the $\{W_h\}_{h \in P}$.\\
4) $T \gets$ All $h \in H$ such that for each $v_i \in V_h$ there is\\
\hspace{3.1em} exactly one variable $x_j \in W$ for which $h(j)=i$.\\
5) For each $h\in T$\\
\> $g_h\gets$ Replace each relevant variable $v_i$ in $f_{h}'$\\
\hspace{4.2em} by $x_j\in W$ where $h(j)=i$.\\
6) Output $\pop(\{g_h\}_{h \in T})$.
  \end{tabbing}
  \end{minipage}}}
  \end{center}
\caption{Reduction for Randomized Non-adaptive Algorithms}
\label{AlgIR}
\end{figure}

\begin{proof}
Let ${\cal A}(d,n,\delta)$ and  ${\cal B}(d,n)$ be as stated in the theorem above.
Let $f \in C$ be the target function and suppose the relevant
variables in $f$ are $x_{i_1},\ldots,x_{i_{d'}}$ where $d'\le d$.
Let be $I=\{i_1,\ldots,i_{d'}\}$.

We first choose $O(\log(n/\delta))$ random hash
functions $h:[n]\to [q]$ and put them in $P$.
For each hash function $h\in P$ we define the following events.
The event $A_h$ is true if $h(I):=\{h(i_1),\ldots,h(i_{d'})\}$ are not distinct.
The event $B_{h,j}$, $j\in [n]\backslash I$, is true if $h(j)\in h(I)$.
For any $h \in P$, the probability that $A_h$ is true is at most
$$\sum_{i=1}^{d-1} \frac{i}{q} = \frac{d(d-1)}{2q} \leq \frac{1}{16}\enspace.$$
For any $h \in P$ and $j\in [n]\backslash I$, the probability that $B_{h,j}$ is true is at most $d/q \leq 1/8$.

By Chernoff bound, with failure probability at most $\delta/(3n)$, we have that at least 7/8 of $\{A_h\}_{h \in P}$ are false.
Therefore, with failure probability at most $\delta/(3n)$, at least 7/8 of $\{f_h:=f(x_{h(1)},\ldots,x_{h(n)})\}_{h \in P}$ are still in $C$. This is true because $C$ is closed under distinct variable projection.

Let $V_h$ be the set of relevant variables that ${\cal A}$ outputs when running with the input $f_h$. Since hashing can not raise the number of relevant variables in the function, we can now for each $h \in P$ run in parallel ${\cal A}(d,q,1/8)$ to learn $f_h$ and find $V_h$.
Let $S \subseteq P$ denote the set of $h \in P$ where the corresponding processes finish after $T(d,q,1/8)$ time. For each $h \in S$, denote the output of ${\cal A}$ on $f_h$ by $f'_h$. With failure probability at most $\delta/(3n)$, it holds that $|S| \geq 7/8|P|$. This is true because ${\cal A}(d,q,1/8)$ stops after $T(d,q,1/8)$ time for each function in $C$, and with failure probability $\delta/(3n)$ at least 7/8 of $\{f_h\}_{h \in P}$ are in $C$. For any other $h \in P$, we stop its process and set $f'_h=0$, $V_h=\varnothing$. Applying Chernoff bound on $\{f'_h\}_{h \in S}$ yields that for at least 6/7 of them ${\cal A}$ succeeds, with failure probability at most $\delta/(3n)$. Therefore, with failure probability at most $2\delta/(3n) \leq 2\delta/3$ we have that for at least $7/8 \times 6/7= 3/4$ of ${h \in P}$ it holds that $h(I)$ are distinct, $f'_h = f_h$ and the set of relevant variables $V_h$ is correct.

For each $h \in P$ define the set $W_h=\{ x_i\ |\ x_{h(i)}\in V_h\}$ and let $W$ be the set of all the variables appearing in more than 1/2 of the $\{W_h\}_{h \in P}$.
We now find the probability that a relevant variable of $f$ is in $W_h$
and compare it with the probability that an irrelevant variable of $f$ is in $W_h$.
For $1 \leq k \leq d'$, the probability that $x_{i_k}\not\in W_h$ is at most the probability that $A_h$ is true or that
${\cal A}$ fails. This probability is at most
$1/16+1/8=3/16.$ Therefore the probability that a relevant variable
is in $W_h$ is at least 13/16.
Therefore, by Chernoff bound, the probability that $x_{i_k}\not\in W$ is at most $\delta/(3n)$.
The probability that an irrelevant variable $x_j$, $j\not\in I$, is in $W_h$ is at most
the probability that $A_h$ or $B_{h,j}$ is true or that ${\cal A}$ fails. This is bounded by $1/16+1/8+1/8=5/16$.
Therefore, by Chernoff bound, the probability that $x_{j} \in W$ for $j\not\in I$ is at most $\delta/(3n)$.

Therefore, when running the algorithm for $O(\log(n/\delta))$ random hash functions, $W$ contains all the relevant variables of the target function $f$ and only them, with probability at least $1-\delta/3$.

Let $T$ be the set of all $h \in H$ such that for each $v_i \in V_h$ there is exactly one variable $x_j \in W$ for which $h(j)=i$.
For each $h \in T$, let be $V_h = \{v_{i_1},\ldots,v_{i_{l_h}}\}$, where $l_h \leq d'$, and $f'_h:=f'_h(v_{i_1},\ldots,v_{i_{l_h}})$. Define $g_h:=f'_h(x_{j_1},\ldots,x_{j_{l_h}})$ where $h(j_k)=i_k$.
Now, with failure probability at most $2\delta/3+\delta/3=\delta$, it holds that at least 3/4 of $\{g_h\}_{h \in T}$ are identical to the target function $f$.
Therefore, at the end we use ${\cal B}(d,q)$ to find $\pop(\{g_h\}_{h \in T})$ and output it.

\end{proof}

Let $C$ be $d$-Junta and let $H$ be a class of representations of boolean functions as truth tables with the relevant variables of the function as entries. Let ${\cal A}$ be the algorithm described in the previous subsection with the query complexity in (\ref{initial}). This algorithm learns $d$-Junta from $H$ and outputs the set of relevant variables. Let ${\cal B}$ be the simple algorithm that given two functions in $H$ with at most $d$ relevant variables where these relevant variables are known, decides if they are identical by comparing the sets of relevant variables and the values of the two functions on the $O(2^d)$ entries in the tables. This algorithm runs in $poly(2^d,n)$ time. We can now apply the reduction for $q=8d^2$ and get the result from Theorem~\ref{randRed}.


\section{Deterministic Adaptive Algorithms}
In this section we study deterministic adaptive learning of $d$-Junta.

\subsection{Lower and Upper Bound}
For deterministic learning, the lower bound is $\Omega(2^d\log{n})$. Its proof is discussed earlier in the section for non-adaptive deterministic learning.
We now address the upper bound. In~\cite{D00} Damaschke gives an adaptive algorithm that learns $d$-Junta with $U(n,d)+O(d\log{n})$ queries, where $U(n,d)$ denotes the minimum size of an $(n,d)$-universal set. His algorithm first fixes an $(n,d)$-universal set $A$ and then asks all the queries $a \in A$. The algorithm can then find all the relevant variables in $d'$ search phases, where $d' \leq d$ is the number of relevant variables. Damaschke shows that at the beginning of each phase of the algorithm, the algorithm can either determine the non-existence of further relevant variables or discover a new one in $\log{n}$ rounds, asking one query each round. Since $A$ is a universal set, after the algorithm has discovered all relevant variables, it can now find the function. This, along with the upper bound on the size of a universal set from~(\ref{LBUS}), gives the upper bound of $O(d2^d\log{n})$ queries. The number of rounds is $O(d\log{n})$. We note here that our upper bound for deterministic non-adaptive learning achieves the same query complexity of $O(d2^d\log{n})$.

\subsection{Polynomial Time Algorithms}
To address polynomial time algorithms, we first consider the algorithm from the previous subsection. If we analyze the running time of this algorithm, we notice that apart from the construction time of the universal set, the algorithm runs in polynomial time. Therefore, for a construction of an $(n,d)$-universal set of size $S$ that runs in time $T$, this algorithm asks $S+O(d\log{n})$ queries and runs in time $T+poly(2^d,n)$. This, along with Lemma~\ref{LemmaUS}, gives the following results: there is an algorithm that runs in $n^{O(d)}$ time and asks $O(d2^d\log{n})$ queries, and there is an algorithm that runs in $poly(2^d,n)$ time and asks $2^{d+O(\log^2 d)}\log n$ queries. Both algorithms run in $O(d\log{n})$ rounds.

We note here that our deterministic non-adaptive algorithms for $n^{O(d)}$ time and $poly(2^d,n)$ time achieve the same query complexity of the above algorithms, yet run in only one round. Therefore, they might be preferable over the adaptive ones. These algorithms are described earlier in the deterministic non-adaptive section.

For $poly(d^d,n)$ time algorithms, one option is to follow the above algorithm and use a construction of an $(n,d)$-universal set that runs in $poly(d^d,n)$ time, as described in Lemma~\ref{LemmaUS}. But, we present another algorithm that will be useful in the construction of randomized algorithms in the next section. In addition, the algorithm we present uses only two rounds for learning. The algorithm is based on another one presented by Damaschke in~\cite{D98}. Damaschke shows that functions from $d$-Junta can be learned in $poly(d^d,n)$ time by a two round algorithm that asks $O((\log{d})d^32^{d}\log{n})$ queries, \cite{D98}. For his algorithm, he uses an $(n,d^2,d)$-PHF. Available $(n,d^2,d)$-PHF constructions are of size $O(d^2\log{n})$, see Corollary~\ref{PHF2}.
We can improve this result if we use a $poly(n)$ time construction of an $(n,d^3,d)$-PHF of size $O(d^2 \log{n}/ \log{d})$, as stated in Corollary~\ref{PHF2}. Furthermore, as part of his algorithm, Damaschke applies an earlier non-adaptive algorithm of his. We can replace it by a more efficient non-adaptive algorithm of ours, which we described earlier in the section for deterministic non-adaptive learning.

To put it in formal terms, we start with a folklore result (see for example the first observation by Blum in \cite{B03}): let $f$ be a function in $d$-Junta with a set of variables $x=\{x_1,\ldots,x_n\}$. Let $a$ and $a'$ be two assignments such that $f(a) \neq f(a')$. Let $y \subseteq x$ be the set of variables that the values of $a$ and $a'$ differ on. Then
\begin{lemma}
\label{binarySearch}
\begin{enumerate}
	\item
	We can in polynomial time and $O(\log{|y|})$ rounds find a relevant variable in $y$ asking $O(\log{|y|})$ queries. \label{prt:1}
	\item
	If, furthermore, we are guaranteed that there exists exactly one relevant variable in $y$, we can find it in only one round. The time complexity remains polynomial and the query complexity remains $O(\log{|y|})$. \label{prt:2}
\end{enumerate}
\end{lemma}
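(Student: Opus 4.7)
For part (\ref{prt:1}), the plan is a standard adaptive binary search. Partition $y$ into two halves $y_1$ and $y_2$ of size roughly $|y|/2$, and form the hybrid assignment $a''$ that agrees with $a$ on the variables in $y_1$ (and on all coordinates outside $y$) and with $a'$ on $y_2$. Querying $f(a'')$ and comparing with $f(a),f(a')$ gives, since $f(a)\neq f(a')$, that either $f(a'')\neq f(a)$ or $f(a'')\neq f(a')$. In the first case $a,a''$ witness a difference whose support is $y_2$; in the second case $a',a''$ witness one whose support is $y_1$. Recurse on the half that still carries the disagreement. After $\lceil\log_2 |y|\rceil$ iterations the support of the witnessing pair has size one, so that single variable must be sensitive with respect to the final pair, hence relevant in $f$. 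Each round asks a single query, so the total query and round complexity is $O(\log |y|)$, and each step runs in polynomial time in $n$.

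For part (\ref{prt:2}), the key observation is that when exactly one variable $x_{j^\star}\in y$ is relevant, we can encode the identity of $j^\star$ in binary and test each bit in parallel, so all queries can be issued in a single round. Index the variables of $y$ as $x_{j_1},\ldots,x_{j_{|y|}}$ and let $m=\lceil\log_2 |y|\rceil$. For each bit position $b\in[m]$ form the assignment $a^{(b)}$ obtained from $a$ by flipping, for every index $\ell$ whose binary expansion has a $1$ in position $b$, the coordinate $x_{j_\ell}$ to its value in $a'$; on coordinates outside $y$, $a^{(b)}$ coincides with $a$ (and with $a'$). Ask all $m$ queries $\{f(a^{(b)})\}_{b=1}^{m}$ non-adaptively in one round, together with $f(a)$.

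Now use the uniqueness assumption: since $x_{j^\star}$ is the only relevant variable in $y$ and $f$ does not depend on the other coordinates in $y$, the value $f(a^{(b)})$ differs from $f(a)$ if and only if the $b$-th bit of the index $\ell^\star$ (where $j_{\ell^\star}=j^\star$) equals $1$. Reading off the bits from the $m$ answers reconstructs $\ell^\star$, and thus $j^\star$, in polynomial time with $O(\log|y|)$ queries and one round. The only mild subtlety in the write-up is making sure that flipping in $a^{(b)}$ is defined on coordinates of $y$ only (so $a^{(b)}$ agrees with $a$ and $a'$ outside $y$) and that the ``exactly one relevant variable in $y$'' hypothesis is what removes the interference that would otherwise spoil the bit-by-bit decoding; once that is clean the verification is immediate.
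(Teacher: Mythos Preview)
Your proposal is correct and follows essentially the same approach as the paper. Part~(\ref{prt:1}) is the identical binary-search argument. For part~(\ref{prt:2}), both you and the paper use a binary encoding of the index of the unique relevant variable; the only cosmetic difference is how the polarity ambiguity is resolved: you compare each answer to the already known value $f(a)$, while the paper appends an extra all-ones row to the encoding matrix so that no column can equal the negation of another. Both devices serve the same purpose and yield the same $O(\log|y|)$ one-round bound.
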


We give the proof for completeness.

\begin{proof}
\begin{enumerate}
	For proving the first part 	
	consider the following full adaptive algorithm that performs a simple binary search. Start with the given $a$, $a'$ and $y$. Let $z \subseteq y$ be a subset of $y$ that contains half of the variables in $y$. At each new round, construct a new assignment $a''$ that is identical to $a$ and $a'$ on the variables $x-y$, is equal to $a$ on $z$ and is equal to $a'$ on $y-z$. If $f(a'')\neq f(a')$, then we know there is a relevant variable in $z$. In this case update $y \gets z$ and $a \gets a''$. Otherwise we have $f(a'')\neq f(a)$ and we can update $y \gets y-z$ and $a' \gets a''$. We start a new round as long as $|y| > 1$. When $|y|=1$, we have found a relevant variable.

	The number of rounds and query complexity follow immediately if we notice that at each new round the size of $y$ reduces by a factor of 2. The time complexity is immediate.

	For proving the second part
	start by constructing a set $A$ of $\lceil \log{|y|} \rceil +1$ queries as follows.
	Let all the queries in $A$ be identical to $a$ and $a'$ on the variables $x-y$.
	In order to describe what values these queries take on $y$, we refer to the $|A| \times n$ order matrix $M$ which the queries in $A$ form its rows.
	Consider the $|y|$ columns in $M$ that correspond to the set of variables $y$. 
	Let each one of these columns take a different $\lceil \log{|y|} \rceil$ dimensional vector in the first $\lceil \log{|y|} \rceil$ rows of $M$. 
	There remains now only the last row. Set it to the value $1$ on every column that corresponds to $y$. Now, ask $A$ non-adaptively. Let $t$, $|t|=|A|$, be the vector of answers. Let $x' \in y$ be the relevant variable in $y$. Then, $t$ must either be equal to the column in $M$ corresponding to $x'$ or be equal to its negation. 
	The way $M$ is constructed guarantees there is no other column corresponding to a variable in $y$ that satisfies this. Therefore, $x'$ is successfully detected.
	
	The time and query complexities follow immediately.
\end{enumerate}
	
\end{proof}

Now, we present the algorithm.
\begin{theorem}
\label{detAdap}
There is a deterministic adaptive two round algorithm that learns $d$-Junta in time $poly(d^d,n)$ with $O(d^32^d\log{n})$ queries.
\end{theorem}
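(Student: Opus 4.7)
The plan is to combine a perfect hash family with the improved non-adaptive learner and the one-round binary search of Lemma~\ref{binarySearch}. First, we construct an $(n, d^3, d)$-PHF $H$ of size $O(d^2 \log n / \log d)$ using Corollary~\ref{PHF2}. By the PHF property, some $h^* \in H$ is injective on the set of $d' \le d$ relevant variables $\{i_1,\dots,i_{d'}\}$ of the target $f$; for this $h^*$ the projected function $f_{h^*}(y_1,\dots,y_{d^3}) := f(y_{h^*(1)},\dots,y_{h^*(n)})$ is a $d'$-junta on the $d^3$ super-variables whose relevant super-variables are exactly $h^*(i_1),\dots,h^*(i_{d'})$.

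In round 1, for each $h \in H$ in parallel we non-adaptively query $f$ on all assignments that arise from running the algorithm of Theorem~\ref{Theo2} on the $d^3$ super-variables induced by $h$ (a query $y \in \{0,1\}^{d^3}$ to $f_h$ is realized by querying $f$ at $(y_{h(1)},\dots,y_{h(n)})$). Since that algorithm asks $O(d 2^d \log d^3) = O(d 2^d \log d)$ queries on $d^3$ variables and runs in time $(d^3)^{O(d)} = d^{O(d)}$, the round-1 query count is $O((d^2 \log n / \log d) \cdot d 2^d \log d) = O(d^3 2^d \log n)$, with total running time $poly(d^d, n)$. For each $h$ the learner returns a candidate $\tilde f_h$ on $d^3$ super-variables together with its set $V_h$ of at most $d$ relevant super-variables; for $h = h^*$ this output is correct.

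In round 2, for each $h \in H$ and each $b \in V_h$ we identify the (presumed unique) relevant original variable in the preimage bucket $h^{-1}(b) \subseteq [n]$ as follows. Since $\tilde f_h$ depends on super-coordinate $b$, pick two super-assignments $y, y'$ differing only in coordinate $b$ with $\tilde f_h(y) \neq \tilde f_h(y')$; their lifts $a, a' \in \{0,1\}^n$ (defined by $a_i = y_{h(i)}$, $a'_i = y'_{h(i)}$) were already queried in round 1 and differ exactly on $h^{-1}(b)$. We then invoke the second part of Lemma~\ref{binarySearch} on $(a, a', h^{-1}(b))$, which, under the assumption that exactly one relevant variable lies in $h^{-1}(b)$, non-adaptively finds it using $O(\log n)$ queries. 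Summing over all $h$ and all $b \in V_h$ costs $O(|H| \cdot d \cdot \log n) = O(d^3 \log^2 n / \log d)$ queries, which is dominated by the round-1 bound.

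Finally, for each $h$ we substitute the variables discovered in round 2 back into $\tilde f_h$ to form a candidate $g_h$ on the original $n$ variables; by construction $g_{h^*} = f$. We output any $g_h$ consistent with all answers collected in rounds 1 and 2, a check doable in $poly(d^d, n)$ time. The main obstacle to watch for is that for ``bad'' hashes the one-relevant-variable-per-bucket hypothesis may fail, so the binary-search step need not return a meaningful answer for those $h$; this is resolved by the final consistency filter, since the existence of $h^*$ guarantees at least one correct, consistent candidate $g_{h^*}$.
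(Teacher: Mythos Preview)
Your overall architecture matches the paper's proof: use the $(n,d^3,d)$-PHF of Corollary~\ref{PHF2}, run the non-adaptive learner of Theorem~\ref{Theo2} on each projected $d^3$-variable instance in round~1, then lift back with the one-round binary search of Lemma~\ref{binarySearch}, part~\ref{prt:2}, in round~2. The paper differs at exactly the point you flagged as ``the main obstacle'': it does \emph{not} run round~2 for every $h\in H$ and then filter. Instead, after round~1 it observes that $\tilde f_h=f_h$ is always correct, and that the number of relevant super-variables of $f_h$ equals $d'$ (the number of relevant variables of $f$) if and only if $h$ separates the relevant set of $f$. Hence picking a single $h'$ with $|V_{h'}|$ maximal is guaranteed to give a separating hash, so the binary-search hypothesis holds for $h'$ and round~2 costs only $O(d\log n)$ queries.

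Your version has a genuine query-count error. The round-2 cost $O(|H|\cdot d\cdot\log n)=O(d^3\log^2 n/\log d)$ is \emph{not} dominated by the round-1 cost $O(d^32^d\log n)$: for constant $d$ the former is $\Theta(\log^2 n)$ while the latter is $\Theta(\log n)$, so your algorithm does not achieve the claimed $O(d^32^d\log n)$ bound. Separately, your consistency filter is only half-justified: you argue that $g_{h^*}$ survives, but not that \emph{only} correct candidates survive. (This can be shown---any non-separating $h$ yields a $g_h$ with strictly fewer than $d'$ relevant variables, so $(g_h)_{h^*}\ne f_{h^*}$ and the round-1 queries for $h^*$, being an equivalent set for $d$-juntas on $d^3$ variables, reject it---but that argument is essentially the max-count criterion the paper uses directly.) Selecting $h'$ by maximum $|V_{h'}|$ before round~2 fixes both the query bound and the selection problem at once.
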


\begin{proof}
Let $f \in d$-Junta be the target function. First, construct an $(n,d^3,d)$-PHF $H$ of size $O(d^2 \log{n}/ \log{d})$ in $poly(n)$ time. For each $h \in H$, we have a new function $f_h: \{0,1\}^{d^{3}} \to \{0,1\}$ that is defined by $f_h(v) := f(x_{h(1)},\ldots,x_{h(n)})$. For a variable in $f_h$ to be a relevant variable, a necessary condition is that at least one relevant variable of $f$ is mapped to it. Therefore, we now have $O(d^2 \log{n}/ \log{d})$ new boolean functions, each of them in $d^3$ variables, of which at most $d$ are relevant. For each $f_h$, we can learn it with our deterministic non-adaptive algorithm from Theorem~\ref{Theo2}. This algorithm runs in $n^{O(d)}$ time and asks $O(d2^d\log{n})$ queries for a function in $n$ variables of which at most $d$ are relevant. We apply this algorithm simultaneously to all $\{f_h\}_{h \in H}$. Therefore, learning all $\{f_h\}_{h \in H}$ is done non-adaptively in $poly(d^d,n)$ time asking
$O((d^2 \log{n}/ \log{d}) \cdot d2^d\log{(d^{3})}) = O(d^32^d\log{n})$
queries. This is the first round of the algorithm.

Let $h' \in H$ be the mapping that succeeded to separate the relevant variables of $f$. In the second round we find the function $f_{h'}$. This is done by simply taking a function $f_{h'}$ with the highest number of relevant variables. This is true because for a variable in some $f_h$ to be a relevant variable, a necessary condition is that at least one relevant variable of $f$ is mapped to it. Therefore in this case, each relevant variable of $f_{h'}$ has exactly one relevant variable of $f$ mapped to it. Remember that the algorithm from Theorem~\ref{Theo2} gives us the set of relevant variables of the target function. Now that we know $f_{h'}$ and its relevant variables, for each relevant variable $v_i$ of $f_{h'}$, we also know an assignment $a := (a_1,\ldots,a_{d^{3}})$ such that $f_{h'}(a|_{v_i \gets 0}) \neq f_{h'}(a|_{v_i \gets 1})$. Therefore, for each relevant variable in $f_{h'}$ we can apply the algorithm from Lemma~\ref{binarySearch} part \ref{prt:2} on the variables of $f$ that are mapped to it, and find the original relevant variable. This is done non-adaptively in polynomial time using $O(\log{n})$ queries. We apply this algorithm simultaneously to all the relevant variables of $f_{h'}$. Now that we have learned $f_{h'}$ and know all the relevant variables of $f$, we have the target function $f$. This second round runs then in polynomial time and asks $O(d\log{n})$ queries.

Therefore, this algorithm learns functions from $d$-Junta in two rounds in $poly(d^d,n)$ time with a query complexity of
$O(d^32^d\log{n})+O(d\log{n}) =  O(d^32^d\log{n}).$
\end{proof}

If we compare the above algorithm with the corresponding deterministic non-adaptive algorithm that runs in $poly(d^d,n)$ time, we notice that they have the same query complexity. Even so, as mentioned before, the above algorithm is going to serve us in the next section.

\section{Randomized Adaptive Algorithms}
When we allow both randomization and adaptiveness, new bounds and algorithms arise. We discuss the new results in the following subsections.

\subsection{Lower Bound}
For randomized adaptive algorithms, Abasi et. al.~\cite{ABM} showed that any Monte Carlo randomized algorithm for learning a class of functions $C$ must ask at least $\Omega(\log |C|)$ queries. This implies
\begin{theorem} Any Monte Carlo randomized adaptive algorithm for learning $d$-Junta must ask at least $\Omega(d\log{n}+2^d)$ membership queries.
\end{theorem}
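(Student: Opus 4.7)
The plan is to invoke the $\Omega(\log|C|)$ information-theoretic lower bound of Abasi et al.\ that is cited in the sentence preceding the theorem, so the whole task reduces to lower-bounding $|C|$ for $C=d\text{-Junta}$ in a way that separately exposes the two summands $2^d$ and $d\log n$.

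For the exponential term, I would exhibit as a subclass all $2^{2^d}$ boolean functions that depend only on the fixed variables $x_1,\ldots,x_d$; these are pairwise distinct as functions on $\{0,1\}^n$ and all lie in $d\text{-Junta}$, giving $|d\text{-Junta}|\ge 2^{2^d}$ and hence $\log|d\text{-Junta}|\ge 2^d$.

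For the logarithmic-in-$n$ term, I would exhibit all AND-terms $x_{i_1}^{\sigma_1}\cdots x_{i_d}^{\sigma_d}$ with $1\le i_1<\cdots<i_d\le n$ and $\sigma\in\{0,1\}^d$. Each such term has a unique set of satisfying assignments on its $d$ variables, so the map $(i,\sigma)\mapsto$ term is injective; this produces $\binom{n}{d}\cdot 2^d$ distinct functions in $d\text{-Junta}$, whence $\log|d\text{-Junta}|\ge d\log(n/d)+d=\Omega(d\log n)$, using the standing small-$d$ regime (say $d\le n^{1-\Omega(1)}$) implicit throughout the paper, under which $d\log(n/d)=\Theta(d\log n)$.

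Combining the two counts via $\max(a,b)\ge (a+b)/2$ gives $\log|d\text{-Junta}|=\Omega(2^d+d\log n)$, and plugging into the Abasi et al.\ bound finishes the proof. There is no real obstacle; the argument is a pure counting lower bound on $|C|$. The only mild subtlety is the distinction between $d\log n$ and $d\log(n/d)$, which vanishes under the usual junta-learning assumption $d\ll n$ and can otherwise be accommodated by restating the bound as $\Omega(d\log(n/d)+2^d)$.
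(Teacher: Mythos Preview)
Your proof is correct and uses the same information-theoretic bound $\Omega(\log|C|)$ as the paper, but organizes the counting differently. The paper gets both summands in a single stroke: it lower-bounds the number of functions with \emph{exactly} $d$ relevant variables by a Bonferroni-type estimate $|F_{=d}|\ge 2^{2^d}-d\,2^{2^{d-1}}\ge 2^{2^{d-1}}$, and then observes that distinct $d$-subsets of $[n]$ together with distinct elements of $F_{=d}$ yield distinct juntas, so $|d\text{-Junta}|\ge \binom{n}{d}\,2^{2^{d-1}}$ and $\log|d\text{-Junta}|=\Omega(d\log(n/d)+2^d)$. Your route instead exhibits two separate subclasses (all functions on a fixed $d$-set, and all $d$-literal terms), extracts $2^d$ and $d\log(n/d)$ respectively, and combines them via $\max$. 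Your decomposition is arguably more elementary since it sidesteps the inclusion--exclusion step entirely; the paper's version is a bit more unified but needs the extra inequality to avoid overcounting juntas with fewer than $d$ relevant variables. Both proofs share the same harmless $d\log(n/d)$ versus $d\log n$ issue, which you flag explicitly and the paper simply absorbs.
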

\begin{proof} Let $F_{=k}$ be the class of boolean functions in $k$ given variables $y_1,\ldots,y_k$, where each function depends on all the $k$ variables. Let $F_k$ be the class of all boolean functions in $k$ given variables $y_1,\ldots,y_k$.
Obviously, $|F_k|=2^{2^k}$. By Bonferroni inequality we have
$$|F_{=d}|\ge |F_d|-d|F_{d-1}|=2^{2^d}-d2^{2^{d-1}}\ge 2^{2^{d-1}}.$$
Therefore $$|\mbox{d-Junta}|\ge \binom{n} {d}|F_{=d}| \ge \binom{n} {d} 2^{2^{d-1}}$$ and
$\Omega(\log |\mbox{d-Junta}|)=\Omega(d\log (n/d)+2^d)=\Omega(d\log{n}+2^d)$.
\end{proof}

\subsection{Polynomial Time Algorithms}
In this subsection, for polynomial time complexity, we give three new Monte Carlo adaptive algorithms for learning $d$-Junta. We start by presenting an algorithm based on the algorithm from Theorem~\ref{detAdap}. A similar algorithm that uses $O(d\log{n}+d^22^d\log{(1/\delta)})$ queries was presented by Damaschke in~\cite{D98}. This query complexity is the result of a modification we apply to the analysis of Damaschke's algorithm, in order to make it applicable for a general failure probability $\delta$. Now we present our result.
\begin{theorem}
\label{randAdap1}
There is a two round Monte Carlo algorithm that learns $d$-Junta with $O(d\log{n}+d2^d\log{(1/\delta)})$ queries and runs in $poly(d^d,n,\log{(1/\delta)})$ time, where $\delta$ is the failure probability and $\delta \leq 1/d$. For $\delta > 1/d$, the query complexity of the algorithm is  $O(d\log{n}+(\log{d})d2^d)$.
\end{theorem}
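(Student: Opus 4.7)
The plan is to randomize the deterministic two-round algorithm of Theorem~\ref{detAdap}, replacing its explicit $(n,d^3,d)$-perfect hash family (of size $\Theta(d^2\log n/\log d)$) by a much smaller batch of $s:=\max\{1,\lceil c\log(1/\delta)/\log(2d)\rceil\}$ independent uniform hash functions $h_1,\ldots,h_s\colon[n]\to[d^3]$, for a suitable constant $c$. In round~$1$ I would run the deterministic non-adaptive learner of Theorem~\ref{Theo2} on each hashed function $f_{h_i}(v)=f(v_{h_i(1)},\ldots,v_{h_i(n)})$ in parallel; this is a $d$-Junta on $d^3$ variables, so each invocation costs $O(d 2^d\log d)$ queries and $(d^3)^{O(d)}=d^{O(d)}$ time and returns a learned representation $f'_{h_i}$ together with its set of relevant variables $V_{h_i}$. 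Between the two rounds, select $h':=h_{i^*}$ with $i^*\in\arg\max_i|V_{h_i}|$. In round~$2$, for every $v_j\in V_{h'}$, read from $f'_{h'}$ a pair $v^0,v^1\in\{0,1\}^{d^3}$ that differ only in coordinate~$j$ and satisfy $f'_{h'}(v^0)\neq f'_{h'}(v^1)$, lift it to $a^0,a^1\in\{0,1\}^n$ via $a^b_k:=v^b_{h'(k)}$ (so $a^0$ and $a^1$ differ exactly on $y_j:=(h')^{-1}(j)$ and $f(a^0)\neq f(a^1)$), and feed $(a^0,a^1,y_j)$ into Lemma~\ref{binarySearch} part~\ref{prt:2} to pin down, in a single round and with $O(\log|y_j|)\le O(\log n)$ queries, the unique relevant variable of $f$ inside $y_j$; plugging these preimage variables back into $f'_{h'}$ reconstructs $f$.

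The analysis has two pieces. First, a single random $h\colon[n]\to[d^3]$ fails to separate the at most $d$ relevant variables of $f$ with probability at most $\binom{d}{2}/d^3\le 1/(2d)$, so all $s$ independent hashes fail with probability at most $(1/(2d))^s\le\delta$ by our choice of $s$. Second, round~$1$ costs $s\cdot O(d 2^d\log d)$ queries, which simplifies to $O(d 2^d\log(1/\delta))$ when $\delta\le 1/d$ (using $s\log d=O(\log(1/\delta))$) and to $O((\log d)d 2^d)$ when $\delta>1/d$ (using $s=1$); round~$2$ adds the $O(d\log n)$ queries of the parallel binary searches. The total running time is $s\cdot d^{O(d)}+poly(n)=poly(d^d,n,\log(1/\delta))$, matching both the query and time bounds claimed in the statement.

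The main delicate point is justifying the max-selection rule that picks $h'$ between the two rounds: the entire round-$2$ binary search is driven by this $h'$, so correctness hinges on establishing the strict gap $|V_{h_{\mathrm{sep}}}|=d'>|V_{h_{\mathrm{not\,sep}}}|$, where $d'$ is the true number of relevant variables of $f$. I would argue it by showing that if $h$ separates the relevant set of $f$, then each relevant $x_j$ of $f$ is the unique relevant preimage of $h(j)$, so $v_{h(j)}$ remains relevant in $f_h$ and $|V_h|=d'$; whereas any $h$ that collides two relevant variables of $f$ merges them onto a common coordinate and therefore necessarily produces a hashed function with strictly fewer than $d'$ relevant variables, so $\arg\max_i|V_{h_i}|$ returns a separating hash whenever any $h_i$ in the batch separates, which happens with probability at least $1-\delta$.
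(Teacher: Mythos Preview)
Your proposal is correct and follows essentially the same approach as the paper's proof: replace the deterministic $(n,d^3,d)$-PHF in Theorem~\ref{detAdap} by $O(\log(1/\delta)/\log d)$ (respectively, one) uniformly random maps $[n]\to[d^3]$, learn each hashed instance with Theorem~\ref{Theo2}, select the hash with the most relevant variables, and finish with Lemma~\ref{binarySearch} part~\ref{prt:2}. Your justification of the max-selection rule (a separating hash yields exactly $d'$ relevant coordinates while any colliding hash yields at most $d'-1$) is exactly the observation the paper uses.
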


\begin{proof}
Let ${\cal A}$ be the two round deterministic algorithm from Theorem~\ref{detAdap}. Our randomized algorithm follows a similar pattern to the pattern of $\cal A$. The difference is that it does not use a deterministic PHF. Instead, in the first round, it uses $O( \log{(1/\delta)}/\log{d})$ random partitions of the variables into $d^3$ bins for $\delta \leq 1/d$, and simply one partition for $\delta > 1/d$. Let $x=\{x_1,\ldots,x_n\}$ be the set of variables of the target function $f$. Let $D$ be the set of relevant variables. Let $P$ be the set of random partitions. For each $p \in P$, let be $f_p:=f(x_{p(1)},\ldots,x_{p(n)})$. Learn the set of functions $\{f_p\}_{p \in P}$ simultaneously applying the deterministic non-adaptive algorithm from Theorem~\ref{Theo2}. This is the first round of the algorithm, it runs in time $poly(d^d,n,\log{(1/\delta)})$. For $\delta \leq 1/d$ it asks $O(d2^d\log{(1/\delta)})$ queries, and for $\delta > 1/d$ it asks $O((\log{d})d 2^d)$ queries.

In the second round, our algorithm takes a function $f_{p'}$ with the highest number of relevant variables and uses it to learn the target function $f$. To this purpose, it applies exactly the same procedure that $\cal A$ uses in the second round. Therefore, the second round runs in polynomial time and asks $O(d\log{n})$ queries. In total, the algorithm runs in $poly(d^d,n,\log{(1/\delta)})$ time and asks $O(d\log{n}+d2^d\log{(1/\delta)})$ queries for $\delta \leq 1/d$, and $O(d\log{n}+(\log{d})d2^d)$ queries for $\delta > 1/d$.

To compute the failure probability we notice that if at least one of the partitions in $P$ succeeds to separate $D$ in the first round, then the algorithm outputs a correct answer. This is true since in this case the algorithm chooses for the second round a function $f_{p'}$ with a successful partition $p'$. Therefore, the failure probability is bounded by the probability that none of the partitions in $P$ succeeds. We first compute the failure probability of a single random partition when each element of $x$ is thrown independently and equiprobably into one of the ${d^3}$ bins. By the union bound, since $|D|\leq d$, the failure probability is bounded by
$$ \binom{d}{2} \frac{1}{d^3} \leq \frac{d^2}{d^3} = \frac{1}{d}       \enspace. $$
Therefore, one partition is enough for $\delta > 1/d$.
In addition, if $\delta \leq 1/d$, for $O( \log{(1/\delta)}/\log{d})$ random partitions, the probability that all of them fail is at most $\left(1/d\right)^{O( \log{(1/\delta)}/\log{d})}=\delta$.
\end{proof}

We were able to improve Damaschke's result by, first, using a partition with a different number of bins, and second, applying our non-adaptive algorithm in the first round.

Therefore, for $poly(d^d,n,\log{1/\delta})$ time randomized algorithms, we have two algorithms. The above algorithm and the deterministic two round algorithm from Theorem~\ref{detAdap} that asks $O(d^32^d\log{n})$ queries.
For best performance, we can choose the algorithm with minimum query complexity based on the requested value of $\delta$.
\ignore{
 We notice that for $\delta=1/(n^{\omega(d^2)})$ the deterministic algorithm has a better query complexity. For $\delta=1/(n^{o(d^2)})$ the opposite is true, and for $\delta=1/(n^{\theta(d^2)})$ the two results have the same complexity.
}


\vspace{\baselineskip}
Now, consider the following two round algorithm that runs in polynomial time. First we describe the algorithm for $\delta \leq 1/d$. In the first round this algorithm applies the same learning process as the above algorithm from Theorem~\ref{randAdap1} except for one difference. For learning the set of functions $\{f_p\}$, this algorithm applies the randomized polynomial time algorithm from Theorem~\ref{boundedDelta}. Then, the algorithm takes $\delta = 1/d$ and repeats (non-adaptively) the first round $O(\log{(1/\delta)}/\log{d})$ times. Therefore, the query complexity of the first round is
\begin{dmath*}
O\left(\left(d2^d\log{(d^3)}+d^22^d\right)\log{(1/(1/d))}\right) \cdot O\left(\frac{\log{(1/(1/d))}}{\log{d}} \right) 
 \cdot  O\left(\frac{\log{(1/\delta)}}{\log{d}}\right)= O\left(d^2 2^d \log{\frac{1}{\delta}} \right) 
\end{dmath*}
and the failure probability is $\delta$.

In the second round, this algorithm performs the same procedure as the above one from Theorem~\ref{randAdap1}. The second round therefore runs in polynomial time and asks $O(d\log{n})$ queries.

 Now, for $\delta > 1/d$, we can still apply a similar algorithm with the following changes. First, one partition of the variables into $d^3$ bins is enough.
Second, for learning the new function with $d^3$ variables in the first round, apply the algorithm with the query complexity from~(\ref{initial}). This is the initial non-adaptive polynomial time algorithm. This gives us a polynomial time algorithm that runs in two rounds. Analyzing the query complexity of this algorithm gives the result $O(d\log{n}+(\log{d})d^22^d)$.

To conclude
\begin{theorem}
	\label{poly2round}
	There is a two round Monte Carlo algorithm that learns $d$-Junta in $poly(2^d,n,\log{(1/\delta)})$ time. The algorithm asks $O(d\log{n}+d^22^d\log{(1/\delta)})$ queries for a failure probability $\delta \leq 1/d$, and asks $O(d\log{n}+(\log{d})d^22^d)$ queries for $\delta > 1/d$.
\end{theorem}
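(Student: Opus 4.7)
The plan is to reuse the two-round architecture of Theorem~\ref{randAdap1}, but to replace the deterministic inner non-adaptive learner (from Theorem~\ref{Theo2}) with the randomized polynomial-time non-adaptive learners developed earlier in this section. This brings the running time down from $poly(d^d,n,\log(1/\delta))$ to $poly(2^d,n,\log(1/\delta))$, at the cost of an extra factor of $d$ in the $2^d$-dependent summand of the query bound.

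For $\delta \le 1/d$, in the first round I pick $k = O(\log(1/\delta)/\log d)$ independent random partitions $p_1,\ldots,p_k\colon[n]\to[d^3]$, form $f_{p_i}(v_1,\ldots,v_{d^3}) := f(x_{p_i(1)},\ldots,x_{p_i(n)})$, and in parallel run the algorithm of Theorem~\ref{boundedDelta} on each $f_{p_i}$ with its internal failure parameter set to $1/d$. A single such call on $d^3$ variables costs $O((d2^d\log(d^3)+d^2 2^d)\log d) = O(d^2 2^d\log d)$ queries, so the first round uses $O(d^2 2^d\log(1/\delta))$ queries in $poly(2^d,n,\log(1/\delta))$ time. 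In the second round I select the $f_{p'}$ whose output has the largest number of relevant variables and apply Lemma~\ref{binarySearch} part~\ref{prt:2} in parallel to each of its relevant variables, recovering the original variable of $f$ in the corresponding bin of $p'$ with $O(\log n)$ queries each, for a second-round cost of $O(d\log n)$. Summing the two rounds gives the claimed $O(d\log n + d^2 2^d\log(1/\delta))$.

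For the failure analysis: a single partition $p_i$ fails to separate the (at most $d$) relevant variables of $f$ with probability at most $\binom{d}{2}/d^3 \le 1/d$ (as in the proof of Theorem~\ref{randAdap1}), and the inner call fails with probability at most $1/d$, so one trial is bad with probability at most $2/d$, whence all $k$ trials are bad with probability $(2/d)^k \le \delta$ for a suitable constant in $k$. When at least one trial succeeds, the $f_{p'}$ selected in round two is the target under a relabelling that isolates the relevant variables, and Theorem~\ref{boundedDelta} also returns the set of relevant variables together with a truth table, which is precisely the witness of sensitivity that Lemma~\ref{binarySearch} part~\ref{prt:2} requires as input.

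For $\delta > 1/d$ a single partition already succeeds with probability at least $1-1/d$, so I take one trial and invoke instead the initial polynomial-time randomized non-adaptive algorithm whose query complexity is given in~(\ref{initial}); plugging in $d^3$ variables and using $\log(1/\delta) = O(\log d)$ in this regime yields $O((\log d)d^2 2^d)$ for the first round, and adding the unchanged $O(d\log n)$ second round gives $O(d\log n + (\log d)d^2 2^d)$. The main obstacle is the bookkeeping between three failure probabilities — the partition, the inner non-adaptive subroutine (which Theorem~\ref{boundedDelta} requires to be at most $1/d$), and the overall target $\delta$ — and verifying that $O(\log(1/\delta)/\log d)$ parallel trials with each inner call fixed to $1/d$ simultaneously produce the right query count and the right failure guarantee.
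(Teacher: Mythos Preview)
Your proposal is correct and essentially identical to the paper's proof: both replace the deterministic inner learner of Theorem~\ref{randAdap1} by the randomized non-adaptive learner of Theorem~\ref{boundedDelta} (resp.\ the version with complexity~(\ref{initial}) for $\delta>1/d$), fix the inner failure parameter at $1/d$, use $O(\log(1/\delta)/\log d)$ random partitions into $d^3$ bins, and finish with Lemma~\ref{binarySearch} part~\ref{prt:2} in the second round. The only cosmetic difference is that the paper phrases the $\delta\le 1/d$ case as ``run the first round of Theorem~\ref{randAdap1} with $\delta'=1/d$ and repeat $O(\log(1/\delta)/\log d)$ times'', whereas you flatten this into a single batch of $O(\log(1/\delta)/\log d)$ partitions; since $\delta'=1/d$ makes the inner number of partitions $O(1)$, the two descriptions coincide.
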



\vspace{\baselineskip}
Finally, we present a polynomial time algorithm for randomized adaptive learning that achieves a query complexity that misses the lower bound of $O(d\log{n}+2^d)$ by a factor not larger than $\log{1/\delta}$.
\begin{theorem}
	\label{manyRounds}
	Functions from $d$-Junta can be learned in $poly(2^d,n,\log{(1/\delta)})$ time by a Monte Carlo algorithm that asks $O(d\log{n}+2^d\log{(1/\delta)})$ queries, where the failure probability satisfies $\delta \leq 1/d$. For $\delta > 1/d$, the query complexity is $O(d\log{n}+(\log{d})2^d)$. The number of rounds is $O(d\log{d})$.
\end{theorem}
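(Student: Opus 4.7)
The plan is to design an adaptive algorithm with two stages. The first stage, executed in a single round, queries $O(2^d\log(1/\delta))$ uniformly random assignments (resp.\ $O((\log d)2^d)$ for $\delta>1/d$); by a union bound analogous to Lemma~\ref{univ} applied with $B=\{0,1\}^d$, with probability at least $1-\delta/3$ these samples realize every one of the $2^d$ patterns on the (unknown) relevant variables. Consequently, the truth table of $f$ is already implicitly determined once we know which variables are relevant, which eliminates the need for any further $2^d$-style blowup later. The same sample also yields many pairs $(a,b)$ with $f(a)\neq f(b)$, each of which must differ on at least one relevant variable and will act as a witness pool.

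The second stage extracts the at most $d$ relevant variables one per phase. In each phase we select from the stored sample a witness pair $(a,b)$ that agrees with the previously identified relevant variables, and then isolate a single new relevant variable. A naive application of Lemma~\ref{binarySearch}(\ref{prt:1}) would take $\Theta(\log n)$ queries and $\Theta(\log n)$ rounds per variable; to collapse the rounds-per-variable to $O(\log d)$ while keeping $O(\log n)$ queries, the plan is to first hash the differing positions into $\mathrm{poly}(d)$ buckets, so that with constant probability the sought relevant variable lands alone in its bucket (the PHF argument underlying Theorem~\ref{randAdap1}). A short test, repeated $O(\log d)$ times so that the per-phase failure probability is at most $\delta/(3d)$, identifies the right bucket in $O(\log d)$ rounds, after which Lemma~\ref{binarySearch}(\ref{prt:2}) extracts the variable in a \emph{single} round using $O(\log n)$ non-adaptive queries (uniqueness inside the bucket legitimizes part~(\ref{prt:2})). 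Summed over $d$ phases, this yields $O(d\log d)$ rounds and $O(d\log n)$ queries for variable identification, to which we add the Stage~1 cost.

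Finally, after all relevant variables are discovered, the truth table is read from the Stage~1 samples in zero additional queries, and the output function is returned. Correctness follows from a union bound: Stage~1 fails with probability at most $\delta/3$, and over the $O(d)$ phases of Stage~2 the per-phase failure $\delta/(3d)$ sums to at most $\delta/3$; altogether the failure probability is at most $\delta$. For $\delta>1/d$, the same scheme is used with a smaller Stage~1 sample and fewer internal repetitions, producing the $O(d\log n+(\log d)2^d)$ bound. The total running time is $\mathrm{poly}(2^d,n,\log(1/\delta))$ because each phase is a hashing step followed by a one-round non-adaptive subroutine of Lemma~\ref{binarySearch}(\ref{prt:2}).

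The main obstacle is achieving $O(\log d)$ rounds per phase with only $O(\log n)$ queries: plain binary search already uses $\log n$ rounds, so the hashing+uniqueness trick is essential. The delicate point is that the PHF-style guarantee only gives constant success per hash attempt, so one must repeat $O(\log(d/\delta)/\log d)$ times \emph{in parallel} inside each phase and then use Stage~1's witness pool to certify the winner; carrying Stage~1 forward is what prevents the $2^d$-term from re-entering the per-variable cost and blowing up the query count back to the Theorem~\ref{randAdap1} regime.
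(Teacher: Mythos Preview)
Your scheme diverges from the paper in a structural way that creates a real gap in the query count. You hash \emph{per phase}, with $R=O(\log(d/\delta)/\log d)$ independent repetitions inside each of the $d$ phases (your ``$O(\log d)$ times'' earlier is inconsistent with this, and only the latter gives per-phase failure $\le\delta/(3d)$). You then want to apply Lemma~\ref{binarySearch}(\ref{prt:2}) \emph{once} per phase after ``certifying the winner'' from Stage~1. But Stage~1 is a pool of uniformly random points in $\{0,1\}^n$ with their $f$-values; it cannot certify that a particular bucket contains \emph{exactly one} relevant variable, nor can it verify relevance of a single candidate $x_c$ (two random $n$-bit strings essentially never agree on all coordinates but one). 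If instead you run part~(\ref{prt:2}) on all $R$ buckets and verify by extra queries, the per-phase cost is $R\cdot O(\log n)$, so over $d$ phases you pay $O\!\bigl(d\log n\cdot \log(1/\delta)/\log d\bigr)$, which is not $O(d\log n)$ when $\delta$ is small (e.g.\ $\delta=1/n$ and $d$ constant). Your remark that ``carrying Stage~1 forward prevents the $2^d$-term from re-entering'' is correct, but it is the $\log n$-term, not the $2^d$-term, that leaks through the repetition.

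The paper avoids this by hashing \emph{globally and first}: it fixes the random partition(s) of $[n]$ into $d^3$ bins before any queries, and then runs the entire first stage --- random sampling \emph{and} all $d$ binary searches of Lemma~\ref{binarySearch}(\ref{prt:1}) --- on the $d^3$-variable function $f_p$, so each search costs only $O(\log d)$ rounds and $O(\log d)$ queries. The successful partition is chosen afterwards by simply taking the one with the most relevant bins discovered; this costs no queries. Only then, in a separate final stage, is Lemma~\ref{binarySearch}(\ref{prt:2}) invoked once per relevant bin to recover the original variable at $O(\log n)$ cost each. Thus the $O(d\log n)$ term is incurred exactly once, completely decoupled from the $\log(1/\delta)$-driven repetition of the first stage. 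Your Stage~1 on $\{0,1\}^n$ cannot play the same role for a global hash, because two random $n$-bit strings will not agree on an entire bin; the paper's witness pairs live in $\{0,1\}^{d^3}$ precisely to make ``agree on previously found bins'' meaningful.
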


\begin{proof}
	We first present the algorithm for $\delta \leq 1/d$. At the end, we specify what changes should be made for the case of $\delta > 1/d$.
	
	Let $f$ be a function in $d$-Junta with a set of variables $x=\{x_1,\ldots,x_n\}$. Let the set of relevant variables be $D=\{x_{i_1},\ldots,x_{i_{d'}}\}$ where $d' \leq d$. Consider the following algorithm that starts by choosing $O\left((\log{1/\delta})/(\log{d})\right)$ random partitions $P$ of the variables into $d^3$ bins. With probability at least $1-\delta/2$, one of the partitions, $p' \in P$, succeeds to separate $D$.  Let be $f_p(y):=f(x_{p(1)},\ldots,x_{p(n)})$ for each $p \in P$. Let $D_{p'}=\{y_{t_1},\ldots,y_{t_{d'}}\}$ be the relevant bins of $f_{p'}$. For each $p \in P$ the algorithm asks a set of $2^d(\ln(2d)+\ln(2/\delta))$ random uniform assignments $A \in \{0,1\}^{d^3}$. From Lemma~\ref{univ}, with probability at least $1-\delta/2$ we have: for every relevant bin $y_{t_j} \in D_{p'}$, $1 \leq j \leq d'$, there are two assignments $a,a' \in A$ such that $a_{t_k}=a'_{t_k}$ for all the relevant bins where $k \neq j$, $a_{t_j}=1-a'_{t_j}$ and $f_{p'}(a)=1-f_{p'}(a')$. That is, $y_{t_j}$ is sensitive in $f_{p'}$ w.r.t. $a$ and w.r.t. $a'$, where $a$ and $a'$ agree on all relevant bins except for $y_{t_j}$. In other words, $a$ and $a'$ are a proof that $y_{t_j}$ is relevant, and they will serve to make sure that $y_{t_j}$ is discovered as such.
	
	After asking the set of assignments $A$ for each $p \in P$, the algorithm consists of three stages.
	In the first stage the algorithm performs $d$ or less phases of the following. Let $I_p$ be the set of discovered relevant bins of $f_p$ after phase $i$. At the beginning $I_p$ is an empty set. At phase $i+1$, the algorithm takes two assignments $b,b' \in A$ such that $f_p(b)=1-f_p(b')$ and $b,b'$ agree on all the bins in $I_p$. Then it finds a new relevant bin in $f_p$ and adds it to $I_p$. This is done using the procedure in Lemma~\ref{binarySearch} part \ref{prt:1}. This procedure is a polynomial time binary search that runs in $O(\log{(d^3)})=O(\log{d})$ rounds and asks $O(\log{(d^3)})=O(\log{d})$ queries. If there are no such $b$ and $b'$ for some $p \in P$, then the algorithm finishes the first stage for this $p$. Notice that as long as there exists a not yet discovered relevant bin in $f_{p'}$, the algorithm finds such $b,b'$ and discovers a new one. This is true with success probability $1-\delta/2$. Therefore, with success probability $1-\delta$, at the end of the first stage we have a successful partition $p'$ with $I_{p'}=D_{p'}$. The successful partition $p'$ is discovered by taking a partition $p$ with maximum number of relevant bins.
	
	This first stage is repeated non-adaptively $O\left((\log{1/\delta})/(\log{d})\right)$ times after taking $\delta = 1/d$. This optimizes the query complexity while maintaining a failure probability of $\delta$.
	
	Therefore, the number of rounds in the first stage is $O(d\log{d})$. The query complexity is
	\begin{dmath*}
	(O(2^d(\ln(2d)+\ln(2/{\delta'}))+d\log{d}) \cdot  O((\log{1/{\delta'}})/(\log{d}))) \;\vrule\;_{{\delta'} \gets 1/d} \cdot 
	 O\left((\log{1/\delta})/(\log{d})\right)	= O\left(2^d\log{1/\delta}\right) 
	\end{dmath*}

    and it runs in polynomial time.
	
	The second stage is simply to learn $f_{p'}$ in one round deterministically by asking $2^{d'}$ queries that take all possible assignments on $I_{p'}$.
	
	The third and final stage consists of a one polynomial time round. For each relevant bin $y_{t_j}$, the algorithm uses the result of the second phase to find an assignment that $y_{t_j}$ is sensitive w.r.t.~it. Then, the algorithm discovers the original relevant variable $x_{i_k} \in D$ that was mapped to $y_{t_j}$ using Lemma~\ref{binarySearch} part \ref{prt:2}. This is a non-adaptive procedure that runs in polynomial time and uses $O(\log{n})$ queries. This is done in parallel for all $y_{t_j} \in I_{p'}$. Then, the algorithm can output the original function.
	If the first stage succeeds, this is the correct target function. Therefore, the success probability of the algorithm is $1-\delta$.
	
	In total, for $\delta \leq 1/d$, the number of rounds of this algorithm is $O(d\log{d})$, the number of queries is
	$O\left(d\log{n}+2^d\log{1/\delta}\right)$ and the time complexity is $poly(2^d,n,\log{(1/\delta)})$.

	For $\delta > 1/d$, the algorithm performs basically the same three stages with the following simple two changes. In the first stage, one partition into $d^3$ variables is enough. In addition, the first stage is performed only one time without repetitions. The number of rounds does not change. Neither does the time complexity. Analyzing the new query complexity for $\delta > 1/d$ gives
	$$
	O\left(d\log{n}+ 2^d(\ln(2d)+\ln(2/\delta))+d\log{d}+2^d  \right) = O(d\log{n}+(\log{d})2^d) \enspace.
	$$
\end{proof}


The results of the algorithms in Theorems~\ref{randAdap1}, \ref{poly2round} and \ref{manyRounds} are summarized in the introduction in Table~\ref{AdaptiveTable} for adaptive learning. One can choose between them depending on the requested value of $\delta$ and the wanted number of rounds.

For randomized adaptive learning, we have two upper bounds. First, the above algorithm from Theorem~\ref{manyRounds}. Second, the deterministic non-adaptive algorithm from Theorem~\ref{Theo2} that asks $O(d2^d\log{n})$ queries.
In addition, the deterministic non-adaptive algorithm from Theorem~\ref{Theo2} along with any other algorithm presented in this section of randomized adaptive learning, they all satisfy the time complexity of $n^{O(d)}$.
Therefore, in this case, the selection between them should be based on the requested value of $\delta$ and the restrictions, if exist, on the number of rounds.

\section{Conclusions and Future Work}
The problem of exact learning of $d$-Junta from membership queries is a fundamental problem with many applications, as discussed in the introduction. Therefore, it would be interesting to try to close or tighten the gaps between the lower and upper bounds in the different presented settings. 

We though note here that any further improvement in the deterministic non-adaptive upper bound would be a breakthrough result since it would give a new upper bound result for the size of an $(n,d)$-universal set, an important combinatorial longstanding open problem. This is true since, as mentioned before, any set of assignments that learns $d$-Junta must be an $(n,d)$-universal set, and $O(d2^d\log{n})$ is the best known upper bound for the size of $(n,d)$-universal sets.

In addition, while using the minimax technique for this problem has resulted in a new lower bound for randomized non-adaptive learning, we believe that to improve the lower bounds presented in this paper one needs to use a new technique that is beyond the existing ones. 

Improving some of the results presented in this paper would be a direct outcome of improvements on the underlying used structures. We find it in particular intriguing to try to improve the $poly(2^d,n)$ time deterministic construction of $(n,d)$-universal sets from Lemma \ref{LemmaUS}. This would give new non-adaptive and adaptive results.

Future work related to the topic of learning $d$-Junta include the study of monotone $d$-Junta. It might be fruitful to revisit the problem of monotone $d$-Junta in light of the work done in this paper. Another problem worth reexamining is the study of decision trees problem, a natural generalization of the $d$-Junta problem.




\bibliography{general}
\bibliographystyle{elsarticle-num}

\end{document}